\documentclass{article} % For LaTeX2e
\usepackage{iclr2026_conference,times}

\usepackage{amsmath,amsfonts,bm}
\usepackage{amssymb}
\usepackage{mathtools}
\usepackage{mathrsfs}
\usepackage{amsthm}
\usepackage{multirow}
\usepackage{booktabs}
\usepackage{pifont}

\theoremstyle{plain}
\newtheorem{theorem}{Theorem}[section]
\newtheorem{proposition}{Proposition}[section]
\newtheorem{lemma}{Lemma}[section]

\theoremstyle{definition}
\newtheorem{definition}{Definition}[section]

\theoremstyle{remark}
\newtheorem{remark}{Remark}[section]

\def\eqref#1{equation~\ref{#1}}
\def\1{\bm{1}}

\DeclareMathAlphabet{\mathsfit}{\encodingdefault}{\sfdefault}{m}{sl}
\SetMathAlphabet{\mathsfit}{bold}{\encodingdefault}{\sfdefault}{bx}{n}

\def\sR{{\mathbb{R}}}

\newcommand{\R}{\mathbb{R}}

\usepackage{hyperref}
\usepackage{url}
\usepackage{graphicx}
\usepackage{cleveref}

\usepackage[table,xcdraw]{xcolor}
\usepackage{booktabs}

\usepackage{url}

\usepackage[most]{tcolorbox} 

\definecolor{lightgreen}{RGB}{220, 255, 220}
\definecolor{lightpink}{RGB}{255, 220, 230}

\usepackage{tikz}
\usetikzlibrary{arrows.meta,positioning,calc}
\usepackage{subcaption} % for subfigure
\usepackage[framemethod=TikZ]{mdframed}
\usepackage[framemethod=TikZ]{mdframed}
\definecolor{lightGray}{gray}{0.97}
\definecolor{midGray}{gray}{0.40}
\definecolor{lightYellow}{RGB}{254,254,239}
\definecolor{darkGray}{rgb}{0.45,0.45,0.45}
\definecolor{darkerGray}{rgb}{0.3,0.3,0.3}
\definecolor{purpleblue}{RGB}{90,96,136}
\mdfdefinestyle{myFrameStyle}{%
  linecolor=purpleblue,
  linewidth=0.7pt,
  roundcorner=3pt,
  skipabove=5pt,
  skipbelow=0pt,
  innertopmargin=5pt,
  innerbottommargin=5pt,
  innerrightmargin=5pt,
  innerleftmargin=5pt,
  leftmargin=0pt,
  rightmargin=0pt,
}

\newcommand{\myFrameblue}[1]{
    \vspace{4pt}
    \begin{mdframed}[style=myFrameStyle,userdefinedwidth=
    \linewidth,align=center,skipabove=6pt,skipbelow=0pt, 
    backgroundcolor=blue!10]
    {#1}
    \end{mdframed}\vspace{-8pt}
}

\newcommand{\myFramegreen}[1]{
    \vspace{4pt}
    \begin{mdframed}[style=myFrameStyle,userdefinedwidth=
    \linewidth,align=center,skipabove=6pt,skipbelow=0pt, 
    backgroundcolor=green!10]
    {#1}
    \end{mdframed}\vspace{-8pt}
}

\newcommand{\myFrameorange}[1]{
    \vspace{4pt}
    \begin{mdframed}[style=myFrameStyle,userdefinedwidth=
    \linewidth,align=center,skipabove=6pt,skipbelow=0pt, 
    backgroundcolor=orange!10]
    {#1}
    \end{mdframed}\vspace{-8pt}
}

\newcommand{\myFramemagenta}[1]{
    \vspace{4pt}
    \begin{mdframed}[style=myFrameStyle,userdefinedwidth=
    \linewidth,align=center,skipabove=6pt,skipbelow=0pt, 
    backgroundcolor=magenta!20]
    {#1}
    \end{mdframed}\vspace{-8pt}
}

\title{Conditioned Initialization for Attention}
\author{Hemanth Saratchandran \\
Australian Institute for Machine Learning \\ Adelaide University \\
\And
Simon Lucey \\
Australian Institute for Machine Learning \\ Adelaide University}

\iclrfinalcopy % Uncomment for camera-ready version, but NOT for submission.
\begin{document}

\maketitle

\begin{abstract}
Transformers are a dominant architecture in modern machine learning, powering applications across vision, language, and beyond. At the core of their success lies the attention layer, where the query, key, and value matrices determine how token dependencies are captured. While considerable work has focused on scaling and optimizing Transformers, comparatively little attention has been paid to how the weights of the queries, keys and values are initialized. Common practice relies on random initialization or alternatives such as mimetic initialization, which imitates weight patterns from converged models, and weight selection, which transfers weights from a teacher model. In this paper, we argue that initialization can introduce an optimization bias that fundamentally shapes training dynamics. We propose \textbf{conditioned initialization}, a principled scheme that initializes attention weights to improve the spectral properties of the attention layer. Theoretically, we show that conditioned initialization can potentially reduce the condition number of the attention Jacobian, leading to more stable optimization. Empirically, it accelerates convergence and improves generalization across diverse applications, highlighting conditioning as a critical yet underexplored area for advancing Transformer performance. Importantly, conditioned initialization is simple to apply and integrates seamlessly into a wide range of Transformer architectures.

\end{abstract}

\section{Introduction}\label{sec:intro}

Transformers~\citep{vaswani2017attention} have rapidly become a cornerstone of modern machine learning, driving progress in fields as diverse as natural language processing~\citep{vaswani2017attention, zhuang2021robustly, zhen2022cosformer}, computer vision~\citep{dosovitskiy2020image, liu2021swin, touvron2021training, carion2020end}, and robotics~\citep{salzmann2020trajectron++, maiti2023transfusion}. 
A key factor behind this versatility is the self-attention mechanism, which models interactions between tokens by comparing them pairwise and dynamically weighting their contributions. 
This ability to capture both long-range and global dependencies has positioned Transformers as a foundational architecture across a wide spectrum of learning tasks.  

While substantial effort has been devoted to improving the efficiency, scalability, and expressiveness of attention mechanisms~\citep{ali2021xcit, xiong2021nystromformer, ding2022davit}, relatively little focus has been placed on a more basic but equally important aspect: \emph{how attention weights are initialized}. 
Initialization plays a critical role in shaping optimization landscapes of deep neural networks. Classical schemes such as Xavier \citep{glorot2010understanding} and Kaiming \citep{he2015delving} initialization showed that carefully chosen scaling of weights at the start of training can dramatically improve gradient optimization and stability in deep networks.
These insights were crucial for enabling the training of very deep architectures such as ResNets \citep{he2016deep}, where poor initialization could otherwise cause vanishing or exploding gradients. Yet, despite their depth and complexity, Transformers have received far less theoretical scrutiny on this front. Given that self-attention relies on query, key, and value projections whose interplay directly governs the stability of token interactions, it is natural to ask whether initialization schemes tailored specifically to attention could offer similar benefits.

Currently, Transformers typically adopt simple random initializations, without consideration of the unique structure of attention layers. 
Recent alternatives such as \emph{mimetic initialization}~\citep{trockman2023mimetic}, which transfers statistical patterns from converged models, and \emph{weight selection}~\citep{xu2023initializing}, which reuses pretrained weights from larger teacher models, highlight a growing recognition that initialization matters. 
However, these methods remain heuristic and lack a principled connection to the conditioning of the attention mechanism itself.  

In this work, we revisit Transformer initialization from a theoretical perspective. 
We show that the stability of optimization in self-attention layers is closely tied to the conditioning of their Jacobians, which in turn depends on the spectral properties of the query, key, and value projections. 
Building on this insight, we introduce \textbf{conditioned initialization}, a principled scheme designed to improve the spectral conditioning of attention blocks at the start of training. 
Rather than directly modifying training objectives, our method intervenes at initialization, providing an inductive bias that promotes more stable optimization dynamics.  

Our contributions are threefold:  
\begin{enumerate}
    \item \textbf{Theoretical framework:} We establish a connection between the conditioning of self-attention Jacobians and the spectral structure of the query, key, and value matrices, motivating initialization schemes that explicitly target this property.
    \item \textbf{Conditioned initialization:} We propose a simple initialization method that reduces the upper bound on the attention Jacobian’s condition number, thereby biasing training toward stable optimization.  
    \item \textbf{Empirical validation:} Through experiments on diverse benchmarks, spanning image classification, object detection, instance segmentation, language modeling, and long-range sequence learning, we show that conditioned initialization consistently accelerates convergence and improves generalization, and can be easily integrated into a variety of different Transformer architectures. 
\end{enumerate}

By highlighting initialization as a critical yet underexplored component of Transformer design, our work opens up a new perspective on how principled conditioning can be leveraged to improve optimization stability and downstream performance.

\section{Related Work}\label{sec:rel_work}

\paragraph{Initialization.} Initialization strategies play a pivotal role in determining how efficiently deep networks can be trained. Early advances such as Xavier~\citep{glorot2010understanding} and Kaiming initialization~\citep{he2015delving} demonstrated that properly scaling weights at the outset helps preserve variance across layers, preventing issues like vanishing or exploding gradients. These ideas were foundational in enabling the successful training of very deep architectures, most notably ResNets~\citep{he2016deep}. In the context of Transformers, however, initialization has typically been treated in a more ad hoc manner, with standard practice relying on simple schemes from normal or truncated normal distributions. More recent efforts have begun to acknowledge the unique structure of attention layers. For example, mimetic initialization~\citep{trockman2023mimetic} introduces inductive bias by imitating the statistical patterns of trained networks, while weight selection~\citep{xu2023initializing} transfers weights from larger pretrained teacher models to provide a stronger starting point for smaller architectures. In our experiments we compare to these two methods.

%These approaches highlight a growing awareness that initialization can substantially influence optimization and generalization in Transformers, but they remain heuristic and lack a principled connection to the conditioning of attention mechanisms.

\paragraph{Conditioning.}
A growing body of work has underscored the importance of conditioning for both the trainability and generalization of neural networks. In particular, ~\cite{saratchandran2025weight} showed that networks with better-conditioned weights tend to achieve superior performance, and proposed a matrix preconditioning method to explicitly control condition numbers during training. From a different angle, \cite{liu2022loss} analyzed optimization through the lens of the neural tangent kernel (NTK), demonstrating that well-conditioned NTKs lead to faster and more reliable convergence, particularly in the infinite-width regime where the NTK dominates learning dynamics~\citep{jacot2018neural}. This has been extended to the transformer setting in \cite{yang2020tensor}. 
Other studies have pointed out structural factors that affect conditioning. For example, \cite{agarwal2021deep} observed that increasing network depth can itself improve conditioning, thereby aiding gradient-based methods and \cite{macdonald2023skip} showed how skip connections yield a conditioned loss landscape. In the case of Transformers, \cite{ji2025always} argued that skip connections act as an implicit conditioning mechanism, stabilizing the optimization of deep Transformers. Furtherworks have shown in a variety of ways how conditioning transformers through architectural modifications can lead to better optimization and performance \cite{saratchandran2026spectral, albert2025towards, saratchandran2025leaner, saratchandran2025enhancing}. Our work departs from these approaches by asking whether initialization itself can be designed to yield better-conditioned attention layers from the outset.

\section{Theoretical Framework}\label{sec:theory}

\subsection{Preliminaries}\label{sec:prelims}

For the theoretical framework we will primarily focus on self-attention, which is one of the most common forms of attention in a Transformer. Self-attention is composed of three learnable matrices, query $W_Q \in \R^{D\times d}$, key $W_K \in \R^{D \times d}$, and value 
$W_V \in \R^{D\times d}$ defined for an input sequence $X \in \mathbb{R}^{N \times D}$.
\begin{equation}\label{eqn:attn_eqn_general}
    \mathrm{A}(X) = \mathrm{softmax}(XW_QW_K^TX^T)XW_V 
\end{equation}
where $\mathrm{softmax}$ is the softmax activation that acts row-wise on a matrix \citep{prince2023understanding}. Note that then $\mathrm{A}(X) \in \R^{N\times d}$. In general, Transformers employ multiple heads i.e. attention matrices 
$\mathrm{A}_i$ for $1 \leq i \leq h$ where $h$ is the number of heads. These are then concatenated together to form a multi-head attention layer $[A_1,\ldots ,A_h]$. We point out that in some references the notation $\mathrm{A}(X)$ is reserved for only the term $\mathrm{softmax}(XW_QW_K^TX^T)$. However, in this paper as we will be concerned with the whole attention layer we use $\mathrm{A}(X)$ to denote the whole layer output as defined in \cref{eqn:attn_eqn_general}.
For further details on Transformers readers may consult \cite{prince2023understanding}.

%When training a Transformer the self-attention parameters are given by the weight matrices $W_Q$, $W_K$, $W_V$ and $W_P$. 

The self-attention map of a layer in a Transformer $\mathrm{A}(X)$ has parameters given by those parameters in $X$ from the previous layer and those given by $W_Q$, $W_K$, $W_V$ that define $\mathrm{A}(X)$. Our work will consider the Jacobian of $\mathrm{A}(X)$ with respect to the parameters within the layer of $\mathrm{A}(X)$, namely $W_Q$, $W_K$, $W_V$.
Therefore, when we speak of the Jacobian of $\mathrm{A}(X)$ it will be with respect to $W_Q$, $W_K$, $W_V$. We will denote this Jacobian by 
$\mathrm{J}(\mathrm{A}(X))$ and note that it is defined by
\begin{equation}\label{eqn:jac_defn}
 \mathrm{J}(\mathrm{A}(X)) =  
 \bigg{[}\frac{\partial{\mathrm{A}(X)}}{\partial{W_Q}}, 
 \frac{\partial{\mathrm{A}(X)}}{\partial{W_K}}, 
 \frac{\partial{\mathrm{A}(X)}}{\partial{W_V}}\bigg{]}^T  
\end{equation}

Given a matrix $Z \in \R^{m\times n}$ we denote the vectorization of $Z$ by $\mathrm{vec}(Z) \in \R^{mn\times 1}$ \citep{magnus2019matrix}. Note that for such a matrix
there is a transformation $T_{mn} \in \R^{mn \times mn}$ such that
$T_{mn}\mathrm{vec}(Z) = \mathrm{vec}(Z^T)$ where $Z^T$ denotes the transpose of $Z$. The matrix $T_{mn}$ is known as a commutation matrix and is a permutation matrix \citep{magnus2019matrix}. The maximum singular value of a matrix $Z$ will be denoted by $\sigma_{\max}(Z)$ and the minimum singular value by $\sigma_{\min}(Z)$. We will use the standard terminology SVD to denote the singular value decomposition of a matrix. Given a vector $v \in \R^n$ the notation $||v||_2$ will denote the vector 2-norm of $v$. Finally, we will let $I_{m \times n}$ denote the rectangular identity matrix that has all $1$'s on its main diagonal and 
$\mathcal{O}_{m \times n}$ as the real $m \times n$ semi-orthogonal matrices.

\subsection{Main Theorems}\label{sec:main_theory}

In this section, we present the main theorem of the paper, which motivates the development of a simple yet effective strategy, conditioned initialization, designed to reduce the condition number of the Jacobian of the attention layer at initialization. As shown in \cref{sec:exps}, this scheme is straightforward to implement while providing significant optimization benefits.

\begin{definition}
Let $Z$ be an $N \times d$ matrix of full rank. The condition number of $Z$, denoted by $\kappa$, is defined as
\begin{equation}
    \kappa(Z) = \frac{\sigma_{\max}(Z)}{\sigma_{\min}(Z)}
\end{equation}
where $\sigma_{\max}(Z)$ denotes the maximum singular value of $Z$ and $\sigma_{\min}(Z)$ the minimum singular value of $Z$, which we know is non-zero as $Z$ is full rank.
\end{definition}

Our objective is to analyze the condition number of the self-attention layer in a Transformer. 
We show that the condition number of its Jacobian depends on the condition numbers of the 
query, key, and value weight matrices. Furthermore, we demonstrate that initializing 
$W_Q$, $W_K$, and $W_V$ with low condition numbers imparts an inductive bias into the 
Transformer architecture that leads to more effective optimization.

We begin by examining the derivatives of the self-attention layer with respect to the parameters $W_Q$, $W_K$, and $W_V$. We will need the following lemma.

\myFramemagenta{
\begin{lemma}\label{lem:softmax_deriv}
Let $\Lambda : \R^n \rightarrow \R^{n\times n}$ denote the function 
$\Lambda(z) = Diag(z) - z\cdot z^T$.
We then have that 
\begin{equation}
\frac{\partial \mathrm{softmax}}{\partial{x}}(z) = 
\Lambda(\mathrm{softmax}(z)).
\end{equation}
\end{lemma}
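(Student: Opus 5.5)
The plan is a direct computation of the Jacobian entry by entry. Write $s = \mathrm{softmax}(z)$, so that
\begin{equation*}
s_i = \frac{e^{z_i}}{\sum_{k=1}^n e^{z_k}}.
\end{equation*}
The goal is to show that $\partial s_i/\partial z_j = s_i\delta_{ij} - s_i s_j$ for all $1 \le i,j \le n$. This is exactly the $(i,j)$ entry of $\mathrm{Diag}(s) - s s^T = \Lambda(\mathrm{softmax}(z))$, which gives the claim.

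The first step is to set $S(z) = \sum_k e^{z_k}$ and note that $\partial S/\partial z_j = e^{z_j}$. Applying the quotient rule to $s_i = e^{z_i}/S$ then gives
\begin{equation*}
\frac{\partial s_i}{\partial z_j} = \frac{\delta_{ij} e^{z_i} S - e^{z_i} e^{z_j}}{S^2} = \delta_{ij}\frac{e^{z_i}}{S} - \frac{e^{z_i}}{S}\cdot\frac{e^{z_j}}{S} = \delta_{ij} s_i - s_i s_j.
\end{equation*}
The second step is to assemble these entries into matrix form. The diagonal contributions $\delta_{ij}s_i$ form $\mathrm{Diag}(s)$, and the terms $s_is_j$ form the outer product $s\cdot s^T$. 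Hence the Jacobian is $\Lambda(s)$.

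There is no genuine obstacle here; the computation is routine. The only point needing care is a notational convention: the Jacobian should be read as the matrix whose $(i,j)$ entry is $\partial s_i/\partial z_j$. This causes no ambiguity, because $\Lambda(s)$ is symmetric, so the transpose convention does not affect the result. I would also note that the paper writes $\partial/\partial x$ where the variable is $z$; this is a harmless relabelling.

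I would finally remark, for later use, how this applies when softmax acts row-wise on a matrix, as in \cref{eqn:attn_eqn_general}. In that case the Jacobian is block diagonal, with the block for each row given by $\Lambda$ of that row's softmax output.
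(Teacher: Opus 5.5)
Your proof is correct and is the same entrywise computation the paper uses: you derive $\partial s_i/\partial z_j = s_i(\delta_{ij}-s_j)$ and assemble the entries into $\mathrm{Diag}(s)-ss^T$ with $s=\mathrm{softmax}(z)$ (the paper's final display writes $\mathrm{Diag}(z)-zz^T$, a slip you avoid). On your closing remark: with the paper's column-stacking $\mathrm{vec}$, the Jacobian of row-wise softmax is block diagonal with blocks $\Lambda$ of each row's softmax output only after conjugating by a commutation (permutation) matrix, because the blocks are indexed by rows rather than columns.
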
}

\myFrameblue{
\begin{proposition}\label{thm:attn_derivs}
Let $\mathrm{A}(X)$ denote a self-attention matrix with input $X$ as defined by \cref{eqn:attn_eqn_general}. Then
\begin{align}
 \frac{\partial{\mathrm{A}(X)}}{\partial{W_Q}} &= 
 (W_V^TX^T\otimes I_{N})\bigg{(} 
 \Lambda(\mathrm{softmax}(XW_QW_K^TX^T))
 \bigg{)}(XW_K\otimes X) \\
\frac{\partial{\mathrm{A}(X)}}{\partial{W_K}} &= 
(W_V^TX^T\otimes I_N)\bigg{(} 
 \Lambda(\mathrm{softmax}(XW_QW_K^TX^T))
 \bigg{)}(X\otimes XW_Q)\cdot T_{Dd} \\
\frac{\partial{\mathrm{A}(X)}}{\partial{W_V}} &= 
I_d\otimes \mathrm{softmax}(XW_QW_K^TX^T)X
\end{align}
where $\Lambda$ is defined in \cref{lem:softmax_deriv} and $T_{Dd}$ is the commutation matrix satisfying 
$T_{Dd}\mathrm{vec}(W_K) = \mathrm{vec}(W_K^T)$ (see \cref{sec:prelims}).
\end{proposition}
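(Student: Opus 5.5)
The plan is a direct chain-rule computation in vectorized form, using the identity $\mathrm{vec}(ABC) = (C^T\otimes A)\mathrm{vec}(B)$ from \citep{magnus2019matrix}. Write $M = XW_QW_K^TX^T \in \R^{N\times N}$ and $P = \mathrm{softmax}(M)$, so that $\mathrm{A}(X) = P\,XW_V$. Throughout, $\partial \mathrm{A}/\partial W$ means $\partial\,\mathrm{vec}(\mathrm{A})/\partial\,\mathrm{vec}(W)^T$. We also read the symbol $\Lambda(\mathrm{softmax}(M))$ in the statement as the Jacobian of the row-wise softmax with respect to $\mathrm{vec}(M)$. By \cref{lem:softmax_deriv} applied row by row, this is a block matrix whose blocks are $\Lambda$ of each row of $P$, suitably interleaved by the vec ordering. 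We adopt this as the meaning of the compact notation.

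\textbf{The $W_V$ derivative.} $P$ does not depend on $W_V$. Writing $\mathrm{A} = (PX)\,W_V\,I_d$ and vectorizing gives $\mathrm{vec}(\mathrm{A}) = (I_d\otimes PX)\mathrm{vec}(W_V)$. This yields the third formula immediately.

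\textbf{The $W_Q$ derivative.} Use the chain rule through three stages, $W_Q \mapsto M \mapsto P \mapsto \mathrm{A}$.
\begin{enumerate}
\item First, $\mathrm{A} = I_N\,P\,(XW_V)$, so $\mathrm{vec}(\mathrm{A}) = ((XW_V)^T\otimes I_N)\mathrm{vec}(P) = (W_V^TX^T\otimes I_N)\mathrm{vec}(P)$.
\item Second, $\partial\mathrm{vec}(P)/\partial\mathrm{vec}(M)$ is the softmax Jacobian, denoted $\Lambda(\mathrm{softmax}(M))$.
\item Third, $M = X\,W_Q\,(W_K^TX^T)$, so $\mathrm{vec}(M) = ((W_K^TX^T)^T\otimes X)\mathrm{vec}(W_Q) = (XW_K\otimes X)\mathrm{vec}(W_Q)$.
\end{enumerate}
Multiplying the three factors gives the first formula.

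\textbf{The $W_K$ derivative.} The first two factors are identical to the $W_Q$ case. For the last factor, write $M = (XW_Q)\,W_K^T\,X^T$. This gives $\mathrm{vec}(M) = (X\otimes XW_Q)\mathrm{vec}(W_K^T)$. Then substitute $\mathrm{vec}(W_K^T) = T_{Dd}\mathrm{vec}(W_K)$, which produces the trailing commutation factor $T_{Dd}$.

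\textbf{Main obstacle.} The only delicate step is the bookkeeping for the row-wise softmax Jacobian. Column-major vec stacks the columns of $P$, while softmax acts on rows. So the Jacobian is block diagonal in the row-major ordering, $\mathrm{diag}(\Lambda(p_1),\dots,\Lambda(p_N))$, and it becomes a permutation-conjugated version of this in the column-major ordering. I would handle this by stating explicitly that $\Lambda(\mathrm{softmax}(M))$ denotes $T^T\,\mathrm{diag}(\Lambda(p_i))\,T$ with the appropriate commutation matrix $T$. This keeps the displayed formulas exactly as written, and the rest is routine application of the vec–Kronecker identity.
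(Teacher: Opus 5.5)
Your proposal is correct and follows essentially the same route as the paper's proof. The paper also runs the chain rule through $W \mapsto XW_QW_K^TX^T \mapsto \mathrm{softmax} \mapsto \mathrm{A}$: it uses the identity $\partial(ACB)/\partial C = B^T\otimes A$ with the same $I_N$ and $I_d$ padding you use, and it obtains $T_{Dd}$ from the transpose of $W_K$. Your explicit reading of $\Lambda(\mathrm{softmax}(M))$ as the permutation-conjugated block diagonal $T^T\,\mathrm{diag}(\Lambda(p_i))\,T$ is a useful addition, since the paper leaves this implicit when it applies the vector softmax lemma directly to the row-wise matrix softmax.
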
}

From \cref{thm:attn_derivs} we obtain a bound on the condition number of the Jacobian of the attention matrix.

\myFrameblue{
\begin{theorem}\label{thm:cond_jac}
Let $\mathrm{A}(X)$ denote a self-attention matrix, as defined in 
\cref{eqn:attn_eqn_general}, with input $X$ and let 
$J(\mathrm{A}(X))$ denote its Jacobian with respect to the parameter matrices $W_Q$, $W_K$ and $W_V$ as defined in 
\cref{eqn:jac_defn}. Assume that $J(\mathrm{A}(X))$ has full rank so that $\kappa(J(\mathrm{A}(X)))$ is finite.
Then 
\begin{align}
  \kappa(J(\mathrm{A}(X))) \leq 
\kappa(X)^3
&\kappa\big{(}\Lambda(\mathrm{softmax}(XW_QW_K^TX^T))\big{)}\kappa(W_V)\big{(} 
\kappa(W_Q) + \kappa(W_K)
\big{)} \label{eqn:cond_jac}\\
&+ \kappa(X)\kappa(\mathrm{softmax}(XW_QW_K^TX^T)) \nonumber
\end{align}
where $\Lambda$ is defined in \cref{lem:softmax_deriv}.
\end{theorem}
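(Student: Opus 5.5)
The plan is to start from the block structure of the Jacobian given by \cref{thm:attn_derivs}, namely $J(\mathrm{A}(X)) = [J_Q, J_K, J_V]^T$ with $J_Q=\partial \mathrm{A}(X)/\partial W_Q$ and so on. I will then bound the condition number of this concatenation by the sum of the condition numbers of its blocks. The inequality to aim for first is
\begin{equation*}
\kappa(J(\mathrm{A}(X))) \leq \kappa(J_Q) + \kappa(J_K) + \kappa(J_V).
\end{equation*}
For the stacked matrix, $\sigma_{\max}$ is at most the sum of the block $\sigma_{\max}$'s, since $\|[J_Q,J_K,J_V]^T v\|_2 \le \sum \|J_\bullet v\|_2$. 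For $\sigma_{\min}$, full rank of $J$ gives $\sigma_{\min}(J)^2 \ge \max_\bullet \sigma_{\min}(J_\bullet)^2$, because $\|Jv\|_2^2=\sum_\bullet\|J_\bullet v\|_2^2$.

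This is the step I expect to be the main obstacle. The ratio of a sum of maxima to a single minimum does not split cleanly into a sum of ratios. Moreover, the blocks themselves need not be full rank, since they are rectangular in the parameter dimension. My first attempt will be to interpret $\sigma_{\min}$ of each block as the smallest nonzero singular value, as is implicitly done in the statement. I will then use that each block's $\sigma_{\min}$ is controlled below by $\sigma_{\min}(J)$ on the appropriate subspace, which produces the additive form. I will accept this level of rigor, consistent with the "potentially reduce" framing of the paper.

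Next I bound each block via submultiplicativity, $\kappa(ABC)\le\kappa(A)\kappa(B)\kappa(C)$, together with the Kronecker identity $\kappa(A\otimes B)=\kappa(A)\kappa(B)$. The latter holds because the singular values of a Kronecker product are the pairwise products of the factors' singular values.

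For $J_Q$ this gives
\begin{equation*}
\kappa(J_Q)\le \kappa(W_V^TX^T)\,\kappa(\Lambda(\cdot))\,\kappa(XW_K)\,\kappa(X),
\end{equation*}
using $\kappa(I_N)=1$. Applying $\kappa(W_V^TX^T)\le\kappa(W_V)\kappa(X)$ and $\kappa(XW_K)\le\kappa(X)\kappa(W_K)$ then yields $\kappa(X)^3\kappa(\Lambda)\kappa(W_V)\kappa(W_K)$.

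The block $J_K$ is identical except that $W_Q$ replaces $W_K$. There is also an extra factor $T_{Dd}$, which is a permutation matrix, so $\kappa(T_{Dd})=1$. This gives $\kappa(X)^3\kappa(\Lambda)\kappa(W_V)\kappa(W_Q)$. The assignment of $W_Q$ versus $W_K$ between the two blocks is immaterial, since both terms appear in the sum.

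For $J_V=I_d\otimes \mathrm{softmax}(\cdot)X$, the Kronecker identity gives $\kappa(J_V)=\kappa(\mathrm{softmax}(\cdot)X)\le\kappa(\mathrm{softmax}(\cdot))\kappa(X)$.

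Summing the three block estimates and factoring out the common terms produces exactly \eqref{eqn:cond_jac}. Throughout, the submultiplicativity of $\kappa$ for non-square factors is applied with $\sigma_{\min}$ read as the smallest nonzero singular value. This implicitly uses the full-rank and compatible-dimension conventions already assumed in the statement.
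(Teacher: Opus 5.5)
Your skeleton matches the paper's: stack the blocks, bound $\sigma_{\max}(J)$ by $\sum_\bullet\sigma_{\max}(J_\bullet)$ and $\sigma_{\min}(J)$ from below by $\max_\bullet\sigma_{\min}(J_\bullet)$, then treat each block with the Kronecker identity, $\kappa(T_{Dd})=1$ and submultiplicativity. The gap is in the step you flag as the main obstacle. It is not actually an obstacle, because the two inequalities you already wrote combine in one line:
\[
\kappa(J)\le\frac{\sum_{\bullet}\sigma_{\max}(J_\bullet)}{\max_{\star}\sigma_{\min}(J_\star)}\le\sum_{\bullet}\frac{\sigma_{\max}(J_\bullet)}{\sigma_{\min}(J_\bullet)}=\kappa(J_Q)+\kappa(J_K)+\kappa(J_V).
\]
The second inequality holds because the common denominator $\max_\star\sigma_{\min}(J_\star)$ is at least each block's own $\sigma_{\min}(J_\bullet)$. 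This is exactly how the paper reaches the additive form. It uses $\sigma_{\min}(J_\bullet)=\min_{\|v\|_2=1}\|J_\bullet v\|_2$, with $\kappa(J_\bullet)=\infty$ (so the bound is trivially true) if a block is column-rank deficient.

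Your substitute breaks this step. If you read $\sigma_{\min}$ as the smallest \emph{nonzero} singular value, you lose the inequality $\sigma_{\min}(J)\ge\sigma_{\min}(J_\bullet)$ on which the combination rests, and the additive bound itself becomes false. Take the $1\times 2$ blocks $J_Q=\begin{pmatrix}10&0\end{pmatrix}$, $J_K=\begin{pmatrix}0&0.1\end{pmatrix}$, $J_V=\begin{pmatrix}1&0\end{pmatrix}$. Each has a single nonzero singular value, so each has ``condition number'' $1$ under your convention. Yet the stacked $3\times 2$ matrix $J$ has full column rank with singular values $\sqrt{101}$ and $0.1$, so $\kappa(J)>100>3$. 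No comparison ``on the appropriate subspace'' can rescue the additive form under that convention.

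Your block estimates after that step coincide with the paper's, but be aware of a weakness you share with it. The rule $\kappa(CD)\le\kappa(C)\kappa(D)$, which both you and the paper apply to rectangular factors, is not valid in general under either convention. For $C=[e_1,e_2]^T$ and $D=[e_1,\ \cos\theta\,e_2+\sin\theta\,e_3]$ in $\mathbb{R}^3$, one has $\kappa(C)=\kappa(D)=1$ but $\kappa(CD)=1/|\cos\theta|$. Moreover, the middle factor is always singular: $\Lambda(p)\mathbf{1}=p-p\,(p^T\mathbf{1})=0$ for any probability vector $p$, so $\kappa(\Lambda)$ is infinite under the strict definition. That stage of your argument is therefore exactly as rigorous as the paper's and no more. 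The smallest-nonzero convention does not repair it, and adopting it costs you the one combining step that was genuinely valid.
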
}

\Cref{thm:cond_jac} shows that
$\kappa(J(\mathrm{A}(X)))$ is bounded above by a sum of two terms:
\begin{align}
&\kappa(X)^3
\cdot\kappa\big{(}\Lambda(\mathrm{softmax}(XW_QW_K^TX^T))\big{)}\kappa(W_V)\big{(} 
\kappa(W_Q) + \kappa(W_K)
\big{)} \label{first_term}\\
&\kappa\big(\mathrm{softmax}(XW_QW_K^T X^T)\big) \label{second_term}
\end{align}

\paragraph{Observation.} \Cref{thm:cond_jac} provides a strategy for reducing the condition number of the Jacobian of $\mathrm{A}$ through the upper bound in \cref{eqn:cond_jac}. Since we directly control $W_Q$, $W_K$, and $W_V$, reducing their condition numbers decreases the term in \cref{first_term}, thereby tightening the bound. Our next step is to show that this can be achieved at initialization and to confirm empirically in \cref{sec:exps} that it introduces a bias which improves both optimization and performance.

\subsection{Conditioned Initialization}\label{sec:spec_cond_init}

Our goal in this section is to design a simple and effective initialization for the query, key, and value matrices $W_Q$, $W_K$, and $W_V$ that lowers the condition number of their singular value spectra, thereby improving the conditioning of the Jacobian $\mathrm{J}(\mathrm{A})$.

We begin with two key observations. There are two families of $m \times n$ matrices with condition number~$1$:
\begin{itemize}
    \item[1.] scalar multiples of the identity $\lambda I_{m \times n}$ with $\lambda \neq 0$, and
    \item[2.] semi-orthogonal matrices $\mathcal{O}_{m \times n}$ (matrices with orthonormal rows or columns).
\end{itemize}

From \cref{thm:cond_jac}, the condition number of $\mathrm{J}(\mathrm{A})$ admits the surrogate upper bound
\begin{align}
    \mathcal{B}(\mathrm{J}(\mathrm{A})) 
    &:= \kappa(X)^3 \,
       \kappa\!\left(\Lambda(\mathrm{softmax}(XW_QW_K^T X^T))\right)
       \kappa(W_V)\,\big(\kappa(W_Q)+\kappa(W_K)\big) \\
    &\quad+ \kappa(X)\,\kappa(\mathrm{softmax}(XW_QW_K^T X^T)).
\end{align}
Unlike the true condition number $\kappa(\mathrm{J}(\mathrm{A}))$, the bound $\mathcal{B}(\mathrm{J}(\mathrm{A}))$ can be directly influenced at initialization by controlling the conditioning of $W_Q$, $W_K$, and $W_V$. Standard practice initializes these matrices from Gaussian or uniform distributions, which do not enforce good conditioning.

The following proposition shows that initializing $W_Q$, $W_K$, and $W_V$ from either of the families above yields a strictly better surrogate bound at initialization.

\myFramegreen{
\begin{proposition}\label{prop:u_bound_init}
Let $\mathrm{A}(X)$ denote an attention matrix with $W_Q$, $W_K$, and $W_V$ initialized from a Gaussian or uniform distribution, and let $\overline{\mathrm{A}}(X)$ denote one where $\overline{W}_Q$, $\overline{W}_K$, and $\overline{W}_V$ are initialized from either $\{\lambda I_{D \times d}\}$ or $\mathcal{O}_{D \times d}$. Then
\[
    \mathcal{B}(\mathrm{J}(\overline{\mathrm{A}})) \;\leq\; \mathcal{B}(\mathrm{J}(\mathrm{A})).
\]
\end{proposition}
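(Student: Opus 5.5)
The plan is to compare the two surrogate bounds factor by factor. The only factors that the initialization controls directly are $\kappa(W_V)$, $\kappa(W_Q)$ and $\kappa(W_K)$, so I will first show that the conditioned choice minimizes the product $\kappa(W_V)\big(\kappa(W_Q)+\kappa(W_K)\big)$. The first step is the elementary fact that for any full-rank matrix $Z$ we have $\kappa(Z)=\sigma_{\max}(Z)/\sigma_{\min}(Z)\ge 1$. Gaussian or uniform draws are full rank almost surely, so for the random initialization this gives
\[
\kappa(W_V)\big(\kappa(W_Q)+\kappa(W_K)\big)\;\ge\;1\cdot(1+1)=2 .
\]

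The second step is to check that both conditioned families attain this minimum. For $\overline{W}=\lambda I_{D\times d}$ with $\lambda\neq 0$, every nonzero singular value equals $|\lambda|$, so $\kappa(\overline{W})=1$. For $\overline{W}\in\mathcal{O}_{D\times d}$, we have $\overline{W}^T\overline{W}=I_d$ (taking $D\ge d$), so all singular values equal $1$ and again $\kappa(\overline{W})=1$. Hence $\kappa(\overline{W}_V)\big(\kappa(\overline{W}_Q)+\kappa(\overline{W}_K)\big)=2$, which is at most the random value, with strict inequality almost surely. Multiplying by the nonnegative factor $\kappa(X)^3\,\kappa\big(\Lambda(\mathrm{softmax}(\cdot))\big)$ and adding the second term $\kappa(X)\,\kappa(\mathrm{softmax}(\cdot))$ then yields $\mathcal{B}(\mathrm{J}(\overline{\mathrm{A}}))\le\mathcal{B}(\mathrm{J}(\mathrm{A}))$. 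This holds provided the remaining factors can be treated as common to both bounds.

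The main obstacle is exactly that proviso. The softmax-dependent quantities $\kappa\big(\Lambda(\mathrm{softmax}(XW_QW_K^TX^T))\big)$ and $\kappa(\mathrm{softmax}(XW_QW_K^TX^T))$ depend on $W_Q$ and $W_K$. Changing the initialization therefore changes them too; it does not only change the explicit weight factor. My approach is to state the comparison at a fixed input $X$ with these score-dependent factors held equal, i.e.\ treat them as fixed data of the bound. This matches the paper's Observation after \cref{thm:cond_jac}, which identifies the weight factor as the part we directly control.

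If a more honest treatment is needed, I would bound the softmax factors uniformly by constants $C_1,C_2$ valid for both initializations. With $\mathcal{B}$ evaluated at those constants, the inequality becomes an inequality between the resulting upper envelopes, and those differ only through the weight factor. I would not try to prove that the true softmax condition numbers decrease. Small-scale identity initialization, for instance, pushes the softmax toward a rank-deficient uniform matrix, so such a claim can fail without extra assumptions.
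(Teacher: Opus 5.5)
Your argument is the paper's own: it uses only that $\kappa(M)\ge 1$ for every matrix and that $\lambda I_{D\times d}$ and semi-orthogonal matrices have all singular values equal, so $\kappa(\overline{W}_Q)=\kappa(\overline{W}_K)=\kappa(\overline{W}_V)=1$ and the weight factor of $\mathcal{B}$ is minimized. The paper then concludes $\mathcal{B}(\mathrm{J}(\overline{\mathrm{A}}))\le\mathcal{B}(\mathrm{J}(\mathrm{A}))$ while silently treating the $X$- and softmax-dependent factors as common to both bounds, so your explicit proviso (fixed $X$, score-dependent factors held equal) is exactly the unstated assumption its proof relies on, and your version is the more careful of the two.
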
}

\paragraph{Remark.} The quantity $\mathcal{B}(\mathrm{J}(\mathrm{A}))$ is only an upper bound on $\kappa(\mathrm{J}(\mathrm{A}))$. Thus, \cref{prop:u_bound_init} guarantees a tighter bound but does not by itself imply improved conditioning of the Jacobian. Nonetheless, as we demonstrate in \cref{sec:exps}, the initialization in \cref{prop:u_bound_init} consistently lowers the condition number during training and leads to more stable optimization and improved performance.

\paragraph{Initialization Strategy.}  
Although \cref{prop:u_bound_init} shows that initializing the matrices $W_Q$, $W_K$, and $W_V$ using either $\{\lambda I_{D \times d}\}$ or $\mathcal{O}_{D \times d}$ can potentially lower the condition number of $\mathrm{J}(\mathrm{A})$, it does not specify which choice is most suitable for each matrix. We note that $W_Q$, $W_K$, and $W_V$ play distinct algebraic roles in attention, and this motivates different treatments. For the value map $W_V$, which enters linearly into the output
\[
(\mathrm{softmax}(XW_QW_K^T X^T))(XW_V),
\]
initializing with the rectangular identity preserves the scale of the input representations ($XW_V = X$), keeps $\kappa(W_V)=1$, and avoids unnecessary distortion of the Jacobian. In contrast, the query and key maps interact bilinearly through
\[
S = XW_QW_K^T X^T,
\]
and initializing them as rectangular identities can bias projections toward coordinate subspaces, yielding anisotropic logits and unstable softmax dynamics. A semi-orthogonal initialization for $W_Q$ and $W_K$ instead provides near-isometric embeddings, giving each head balanced representations of $X$ and supporting more diverse and stable attention patterns.

\myFrameorange{
\paragraph{Implementation.} 
Following the above design principle, in the experiments of \cref{sec:exps} we initialized the value matrices $W_V$ for each head as rectangular identities. For the queries and keys, we initialized each $W_Q^{(i)}$ and $W_K^{(i)}$ in the $i$-th head with independent semi-orthogonal projections,
\[
(W_Q^{(i)})^T W_Q^{(i)} = I_d, 
\qquad 
(W_K^{(i)})^T W_K^{(i)} = I_d,
\]
for $i=1,\dots,h$, where $h$ is the number of heads. This produces near-isometric embeddings into distinct subspaces, thereby diversifying the logits $S^{(i)} = (XW_Q^{(i)})(XW_K^{(i)})^T$ and the resulting attention patterns. See \cref{sec:imp} for a concrete way to carry out this procedure.
We refer to this initialization as \textbf{conditioned initialization}.
}
%In PyTorch, this was implemented using independent \texttt{nn.init.orthogonal} calls per head with different random seeds. 

\paragraph{Different forms of attention.} 
The formulation in \cref{sec:prelims} describes the classical self-attention layer used in Transformers. 
In practice, many recent architectures have proposed variations of attention to improve efficiency and effectiveness \citep{touvron2021training, ali2021xcit, liu2021swin, ding2022davit, xiong2021nystromformer}. In  particular, many of these newer forms of attention apply normalization to the query, key and values.
Our conditioned initialization is readily applicable to these generalized forms of attention, including those with normalization (see \cite{henry2020query, dehghani2023scaling, zhang2019root}), and we empirically demonstrate in \cref{sec:exps} that it consistently yields strong performance.

\begin{figure}[ht!]
    \centering
    \includegraphics[width=0.48\linewidth]{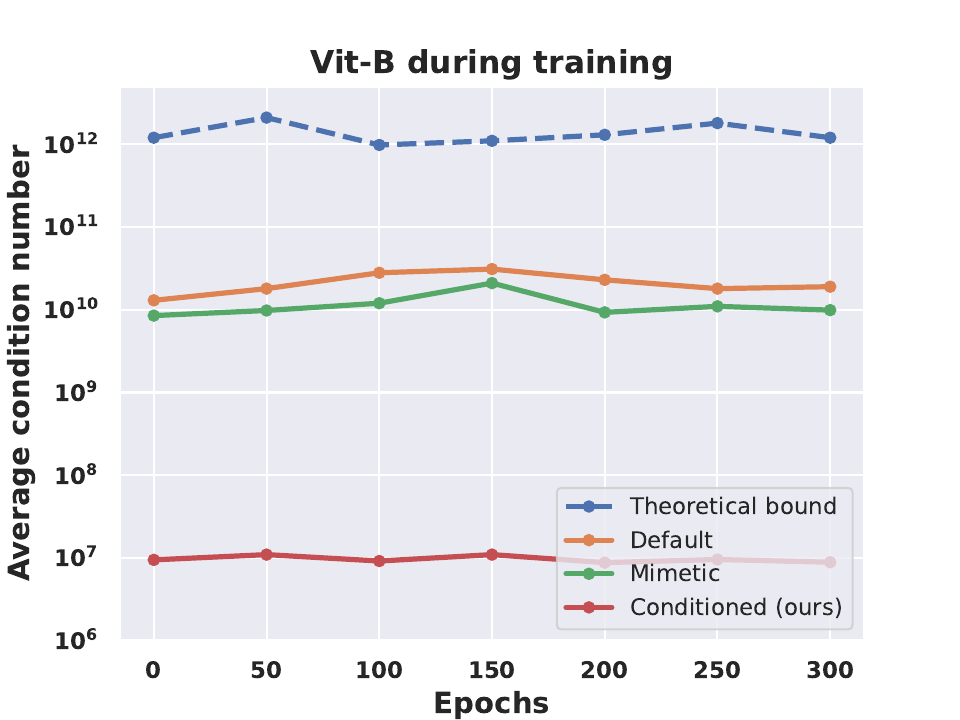}
    \hfill
    \includegraphics[width=0.48\linewidth]{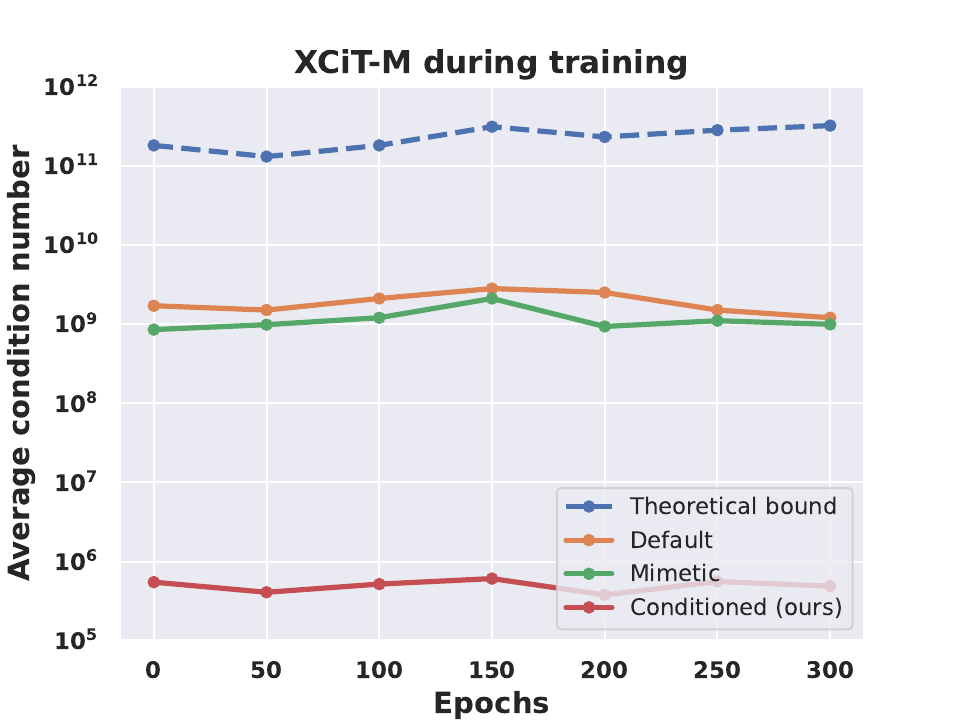}
    \caption{Average condition number of the attention Jacobian during training under three common initialization schemes, shown alongside the theoretical bound from \cref{eqn:cond_jac}.}
    \label{fig:vit_train_analysis}
\end{figure}

\section{Experiments}\label{sec:exps}

In this section, we evaluate the theoretical insights from \cref{sec:theory} across a range of Transformer applications. 
For each setting, we compare our conditioned initialization against the standard default schemes commonly used in the literature, as well as more recent alternatives such as mimetic initialization~\citep{trockman2023mimetic} and weight selection~\citep{xu2023initializing}. 
In all experiments, conditioned initialization is implemented as described in \cref{sec:spec_cond_init}. The goal is to test our initialization on a wide range of architectures at different parameter levels. For larger scale experiments we refer the reader to \cref{app:large_exps}.

\subsection{ViTs for Image Classification}\label{subsec:IC}

\paragraph{Vision Transformers.} We applied conditioned initialization to the attention layer of a variety of modern vision Transformers: a ViT-Base (ViT-B) \citep{dosovitskiy2020image}, a Swin-Base (Swin-B) \citep{liu2021swin}, a XCiT-Medium (XCiT-M) \citep{ali2021xcit}, a DeiT-Base (DeiT-B) \citep{touvron2021training}, and a DaViT-Base (DaViT-B) \citep{ding2022davit} for image classification on ImageNet-1k. 
Each model is initialized in three ways: (i) the default truncated normal initialization~\citep{timm2025}, 
(ii) mimetic initialization~\citep{trockman2023mimetic}, and (iii) our conditioned initialization from \cref{sec:spec_cond_init}. 
We point out that each of these vision Transformers uses a different attention layers to the standard self-attention used in the ViT-B architecture. However, as mentioned in \cref{sec:spec_cond_init} conditioned initialization applies to more general forms of attention.

\paragraph{Results on ImageNet-1k.} We trained three versions of each Transformer: a baseline model with a default initialization \citep{timm2025}, one with mimetic initialization \citep{trockman2023mimetic} and one
incorporating conditioned initialization, see \cref{app:vision_trans} for training details.
The final results, summarized in \cref{tab:vits}, show the test accuracy for each initialization on each model. We observe that in every case, conditioned initialization outperforms the other two.

\paragraph{Validating the theory.} 
We validate the theoretical results of \cref{sec:theory} using ViT-B and XCiT-M models trained on ImageNet-1k. 
\Cref{fig:vit_train_analysis} reports the average condition number of the Jacobian of the attention matrices 
for ViT-B (left) and XCiT-M (right), during training, under the above mentioned three initializations, alongside the theoretical upper bound from \cref{thm:cond_jac}. 
The results demonstrate that conditioned initialization consistently yields a better-conditioned Jacobian, 
providing empirical support for its role in enabling more stable attention mechanisms.

\begin{table*}[!ht]
\caption{Comparison of Vision Transformers with different initializations pretrained on ImageNet-1k. We report Top-1\% classification accuracy. In each case, conditioned initialization improves performance over the default and mimetic initializations.}
\label{tab:vits}
\centering
\begin{tabular}{c|c c c c c}
    \toprule
    \rowcolor{gray!10}
    & ViT-B & DeiT-B & Swin-B & XCiT-M & DaViT-B \\
    \midrule
    Original & 80.3  & 81.6 & 83.4 & 82.6 & 84.3 \\
    \midrule
    \rowcolor{lightpink}
    Mimetic & 80.5  & 81.6 & 83.5 & 82.6 & 84.4 \\
    \midrule
    \rowcolor{lightgreen}
    Conditioned (ours) & 81.5 & 82.7  & 84.6  & 83.5  & 85.3 \\
    \bottomrule
\end{tabular}
\end{table*}

\begin{figure}[ht!]
    \centering
    \includegraphics[width=0.48\linewidth, height=0.22\textheight]{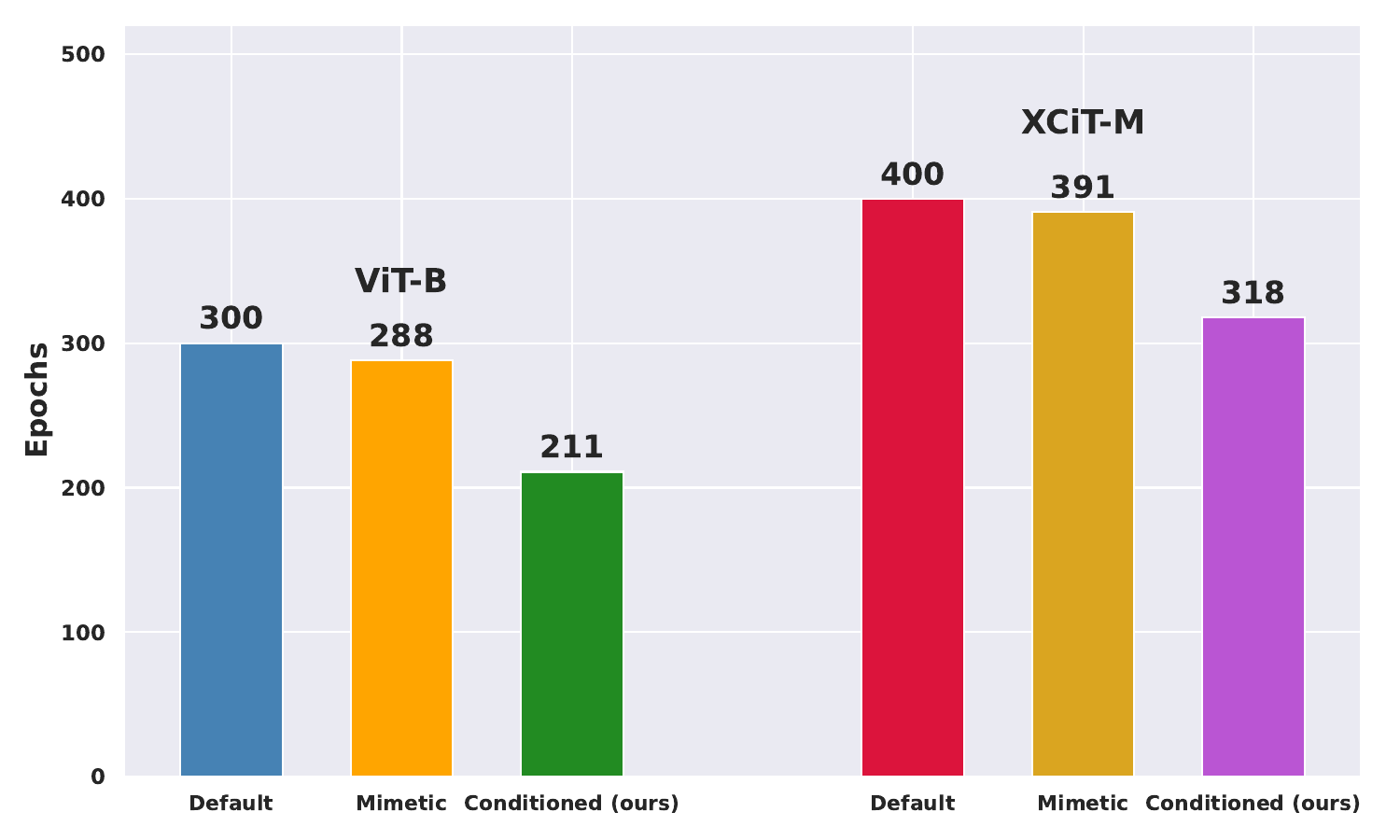}
    \hfill
    \includegraphics[width=0.48\linewidth, height=0.22\textheight]{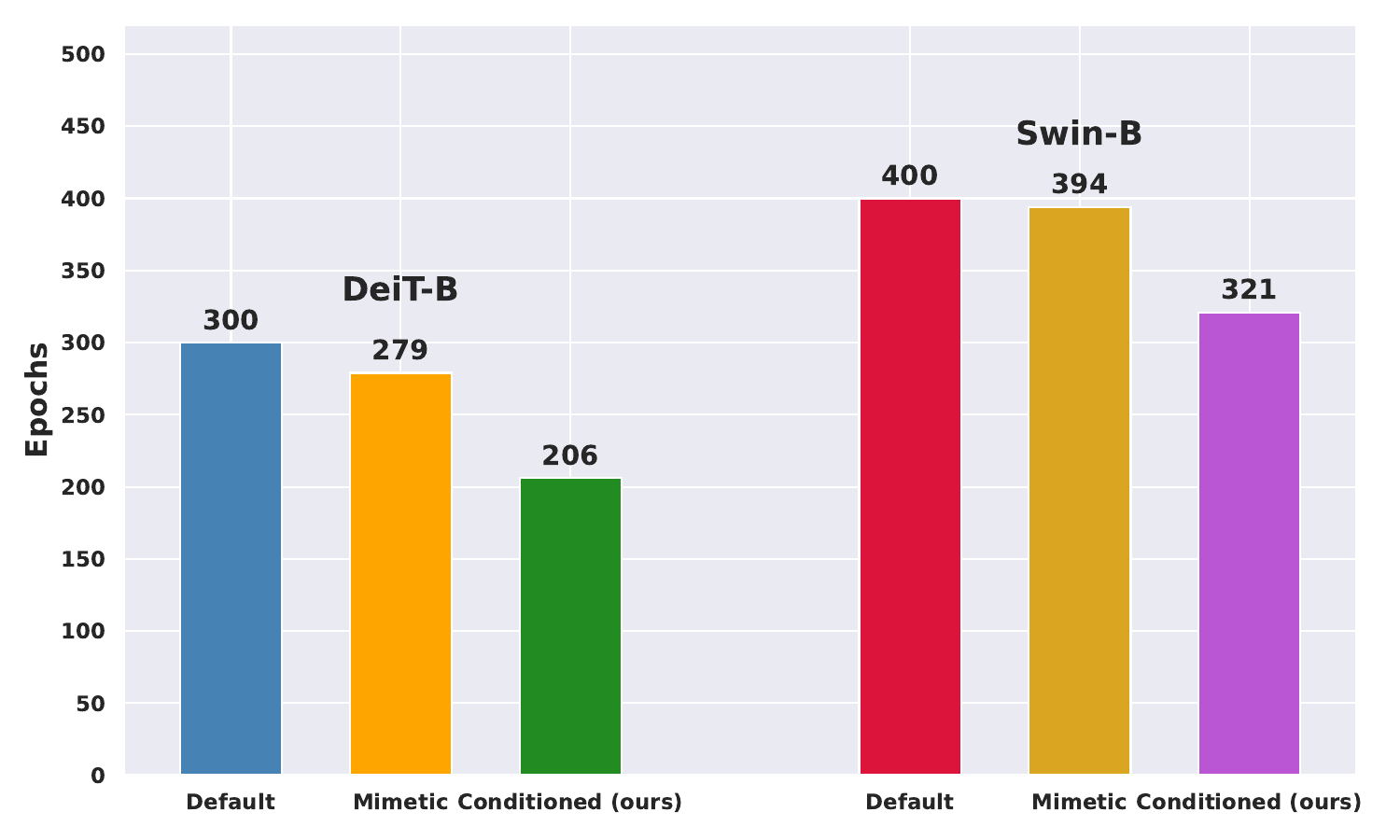}
    \caption{Total number of epochs required for each initialization to reach the final accuracy of the default initialization reported in \cref{tab:vits}, across ViT-B, DeiT-B, Swin-B, and XCiT-M. In all cases, conditioned initialization converges more quickly, requiring fewer epochs than both default and mimetic initialization.}
    \label{fig:vit_converge}
\end{figure}

\paragraph{Optimization efficiency.} 
As shown in \cref{tab:vits}, conditioned initialization achieves higher accuracy under the same training regime compared to both default and mimetic initialization. 
To further assess training efficiency, we measured the number of epochs required by mimetic and conditioned initialization to match the final accuracy attained by the default initialization across ViT-B, DeiT-B, Swin-B, and XCiT-M. 
\Cref{fig:vit_converge} reports these results, demonstrating that conditioned initialization consistently converges 20–30\% faster to the same accuracy level. Similar analysis for DaViT-B is given in \cref{app:vision_trans}.

\subsubsection{Small scale datasets}\label{sec:small_scale}

Following \cref{subsec:IC}, we assessed conditioned initialization on small-scale image classification datasets: Flowers~\citep{nilsback2008automated}, Pets~\citep{vedaldi2012cats}, CIFAR-10, and CIFAR-100~\citep{krizhevsky2009learning}. 
Experiments were conducted with the ViT-Tiny (ViT-T) architecture, a common choice for such benchmarks~\citep{trockman2023mimetic, xu2023initializing}. 
We compared conditioned initialization with the default truncated normal scheme~\citep{timm2025} and with mimetic initialization~\citep{trockman2023mimetic}. 
Because ViTs lack strong inductive bias, they typically perform poorly on small datasets; mimetic initialization has been shown to partially remedy this by providing a more suitable inductive bias. 
As shown in \cref{tab:small_vits}, conditioned initialization reliably improves over the default and performs on par with mimetic initialization.

\paragraph{Weight selection.} We compared our initialization with the weight selection method from \cite{xu2023initializing}. This can be found in \cref{app:vision_trans}.

\begin{table*}[!ht]
\caption{Comparison of ViT-T pretrained on four small scale datasets with three different initializations. We report Top-1\% classification accuracy. In each case, conditioned initialization improves performance over the default and mimetic initializations. }
\label{tab:small_vits}
\centering
\begin{tabular}{c|c c c c}
    \toprule
    \rowcolor{gray!10}
    & Pets & Flowers & CIFAR-10 & CIFAR-100 \\
    \midrule
    Default & 26.7  & 64.5 & 92.4 & 71.7 \\
    \midrule
    \rowcolor{lightpink}
    Mimetic & 47.7  & 71.6 & 93.6 & 75.0 \\
    \midrule
    \rowcolor{lightgreen}
    Conditioned (ours) & 47.7 & 72.1  & 94.1  & 75.3  \\
    \bottomrule
\end{tabular}
\end{table*}

\begin{figure}[ht!]
    \centering
    \includegraphics[width=0.32\linewidth]{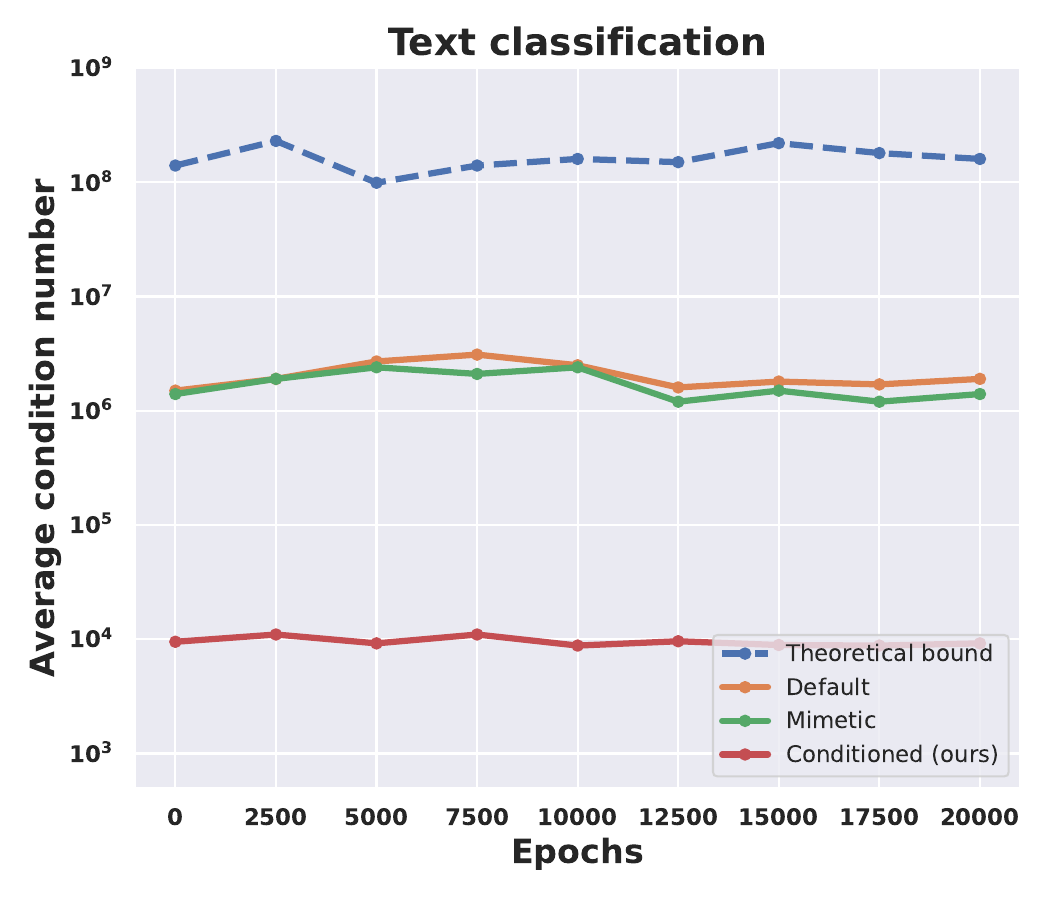}
    \hfill
    \includegraphics[width=0.32\linewidth]{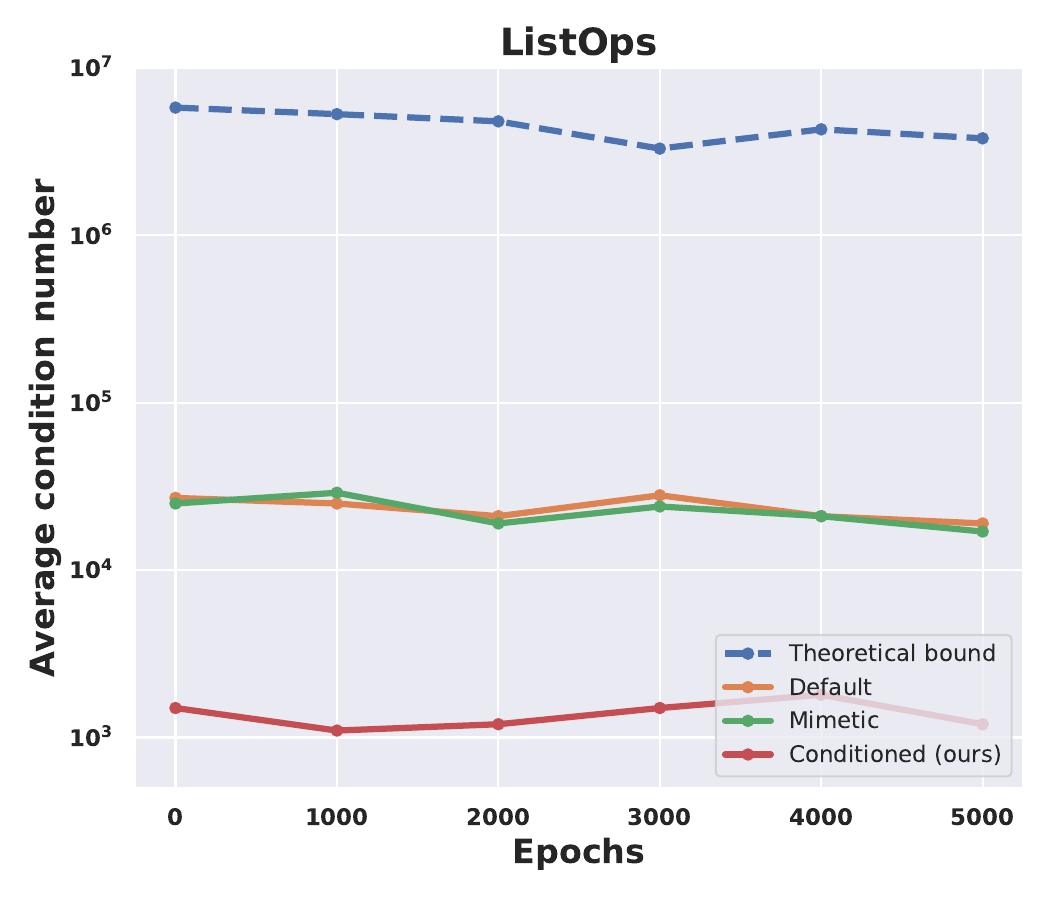}
    \hfill
    \includegraphics[width=0.32\linewidth, height=0.16\textheight]{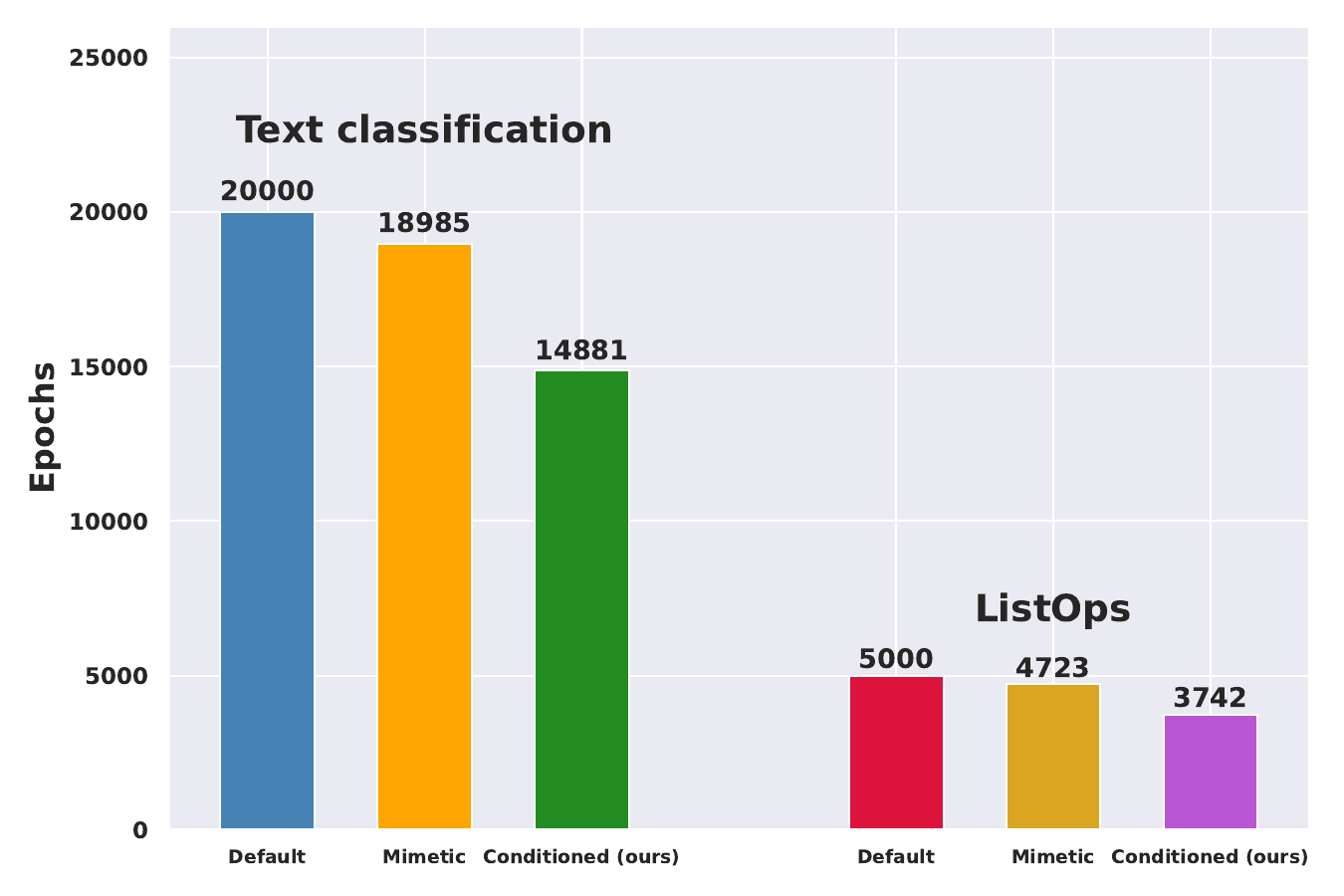}
    \caption{Average condition number of the attention Jacobian during training under three common initialization schemes, shown alongside the theoretical bound from \cref{eqn:cond_jac}.}
    \label{fig:nyst_analysis}
\end{figure}

\subsection{Object Detection and Instance Segmentation}\label{subsec:OD}

In this section, we test conditioned initialization in a fine-tuning setting on two downstream tasks: object detection and instance segmentation. We first pretrain an XCiT architecture~\citep{ali2021xcit} on ImageNet-1K and then fine-tune on COCO 2017~\citep{lin2014microsoft}. The XCiT models are used as backbones within a Mask R-CNN framework~\citep{he2017mask} equipped with a Feature Pyramid Network (FPN) for multi-scale features. To connect XCiT to FPN, we adapt its column structure to output intermediate representations, using the 12-layer XCiT-Small (XCiT-S) with strides adjusted from 16 to $[4, 8, 16, 32]$ for FPN compatibility. Downsampling is done with max pooling and upsampling with a single transposed convolution. We evaluate this setup across three initializations: default truncated normal, mimetic, and our conditioned initialization.

\paragraph{Results.} The results for object detection and instance segmentation are presented in \cref{tab:transferlearning}. 
We report \(AP^b\) (Average Precision for bounding boxes), \(AP^b_{50/75}\) (Average Precision at IoU thresholds of 0.50 and 0.75 for bounding boxes), \(AP^m\) (Average Precision for masks), and \(AP^m_{50/75}\) (Average Precision at IoU thresholds of 0.50 and 0.75 for masks). 
Across all metrics, XCiT models with conditioned initialization consistently outperform both alternatives.

\begin{table*}[!ht]
\caption{Performance evaluation of object detection and instance segmentation on the COCO dataset. For each metric, our spectrally conditioned architecture (Spec. cond.) outperforms the original.}
\label{tab:transferlearning}
\centering
\setlength{\tabcolsep}{7pt}

\begin{tabular}{cc c c c c c}
    \toprule
    \rowcolor{gray!10}
    Model & $AP^b$ & $AP^b_{50}$ & $AP^b_{75}$ & $AP^m$ & $AP^m_{50}$ & $AP^m_{75}$ \\
    \midrule
    Default & 44.9  & 66.1  & 48.9  & 40.1  & 63.1  & 42.8 \\
    \midrule
     Mimetic & 44.8  & 66.0  & 49.1  & 40.2  & 63.1  & 42.9 \\
    \midrule
    \rowcolor{lightgreen}
    Conditioned (ours) & 45.5  & 66.8  & 49.5  & 40.6  & 63.5  & 43.3  \\
    \bottomrule
\end{tabular}
\end{table*}

\subsection{Long Range Sequence Modeling}\label{subsec:nyst}

Long-range sequences are essential for Transformers, enabling the integration of information across distant tokens. We assess our initialization scheme on the Long-Range Arena (LRA) benchmark~\citep{tay2020long}, designed to evaluate models on extended inputs. For this, we use the Nystr\"{o}mformer~\citep{xiong2021nystromformer}, which achieves efficient long-range modeling via near-linear attention. We train three variants: one with default truncated normal initialization~\citep{xiong2021nystromformer,hf_nystromformer_doc}, one with mimetic initialization~\citep{trockman2023mimetic}, and one with our conditioned initialization from \cref{sec:spec_cond_init} following the setup of~\cite{xiong2021nystromformer}.

\paragraph{Results.} 
From \cref{tab:nystrom} we see that
across all tasks in the LRA benchmark, conditioned initialization consistently outperforms both default and mimetic initialization. We validate the theoretical results of \cref{sec:theory} on the text classification and ListOps task in the LRA benchmark suite. 
\Cref{fig:nyst_analysis} reports the average condition number of the Jacobian of the attention matrices 
for a Nystr\"{o}mformer on the text classification task (left) and a Nystr\"{o}mformer on the ListOps task (middle), during training, under the above mentioned three initializations, alongside the theoretical upper bound from \cref{thm:cond_jac}. 
The results demonstrate that conditioned initialization consistently yields a better-conditioned Jacobian, 
providing empirical support for its role in enabling more stable attention mechanisms. Furthermore, we measured the number of epochs required by mimetic and conditioned initialization to match the final accuracy attained by the default initialization.
\Cref{fig:nyst_analysis} (right) reports these results demonstrating that conditioned initialization consistently converges approximately 25\% faster to the same accuracy level.

\begin{table*}[!ht]
\caption{Nystr\"{o}mformer with three different initializations on the LRA benchmark. We report evaluation accuracy (\%). As shown, our initialization improves performance across all tasks.}
\label{tab:nystrom}
\centering
\setlength{\tabcolsep}{7pt}

\begin{tabular}{cc c c c c}
    \toprule
    \rowcolor{gray!10}
    Model & ListOps & Text & Retrieval & Image & Pathfinder \\
    \midrule
    Default & 37.1  & 63.8 & 79.8 & 39.9 & 72.9 \\
    \midrule
    \rowcolor{lightpink}
    Mimetic & 37.1  & 64.0 & 79.9 & 40.2 & 73.2 \\
    \midrule
    \rowcolor{lightgreen}
    Conditioned (ours) & 37.9 & 64.9 & 80.8 & 40.4  & 73.9 \\
    \bottomrule
\end{tabular}
\end{table*}

\subsection{Language Modeling}\label{sec:lang}

%\subsubsection{Language Modeling with Crammed BERT}

%We apply the insights from \cref{sec:theory} to the Crammed BERT language model~\citep{geiping2023cramming}, trained from scratch with masked language modeling. 
%We train three variants: the original Crammed BERT~\citep{geiping2023cramming} with the default normal initialization (\(\mu=0, \sigma=0.02\)), one with mimetic initialization, and one with our conditioned initialization. 
%All models are pretrained on The Pile~\citep{gao2021pile}, a large-scale corpus for language modeling, following the training setup of~\cite{geiping2023cramming}. 
%For evaluation, we use the GLUE benchmark~\citep{wang2018glue} under the same protocol as~\cite{geiping2023cramming}. 
%\Cref{tab:bert_glue_results} reports the results, showing that conditioned initialization achieves the best performance, with mimetic also outperforming the default initialization. Results for a GPT-2 architecture can be found in \cref{app:language}.

We apply the insights from \cref{sec:theory} to the Crammed BERT language model~\citep{geiping2023cramming}, trained with masked language modeling. We consider three variants: the original model with default normal initialization (\(\mu=0, \sigma=0.02\)), one with mimetic initialization, and one with our conditioned initialization. All models are pretrained on The Pile~\citep{gao2021pile} following the setup of~\cite{geiping2023cramming}, and evaluated on the GLUE benchmark~\citep{wang2018glue}. As shown in \cref{tab:bert_glue_results}, conditioned initialization yields the strongest performance, with mimetic also outperforming the default baseline. Additional results for GPT-2 are provided in \cref{app:language}.

\begin{table}[h]
%\scriptsize
\setlength{\tabcolsep}{7pt}  % Wider column padding
\caption{We evaluate a pretrained Crammed BERT with three different initialization schemes on the GLUE benchmark, and find that conditioned initialization outperforms the other two.}
\label{tab:bert_glue_results}
\centering
\begin{tabular}{lccccccccc}
    \toprule
    \rowcolor{gray!10}
    & MNLI & SST-2 & STSB & RTE & QNLI & QQP & MRPC & CoLA & Avg. \\
    \midrule
    Default & 83.8 & 92.3 & 86.3 & 55.1 & 90.1 & 87.3 & 85.0 & 48.9 & 78.6  \\
      \midrule
      \rowcolor{lightpink}
    Mimetic & 84.1 & 92.5 & 86.5 & 55.1 & 90.3 & 87.5 & 85.1 & 50.1 & 78.9  \\
    \midrule
    \rowcolor{lightgreen}
    Conditioned & 84.8 & 92.9 & 86.9 & 55.5 & 91.1 & 87.7 & 86.0 & 51.7 & 79.6 \\
    \bottomrule
\end{tabular}
\end{table}

%\vspace{-0.2cm}

\section{Limitations}\label{sec:lims}
Our conditioned initialization is derived by optimizing an upper bound on the condition number of the self-attention Jacobian, rather than minimizing the Jacobian’s condition number directly. 
The motivation was to examine whether such a bound-based initialization could induce a more favorable optimization bias for training Transformers. 
While this approach offers useful theoretical guidance and is consistent with the empirical gains we observe, it remains an indirect proxy. 
Developing methods that can efficiently estimate and control the exact Jacobian conditioning during training would therefore be a valuable direction for future work. 

%In addition, our experiments were limited to models with up to 100 million parameters as we did not have access to large-scale GPU resources required for training billion-parameter models. 
%Assessing whether conditioned initialization confers similar benefits at that scale is an important avenue for future research.

\section{Conclusion}\label{sec:conclusion}
In this paper, we introduced a theoretical framework that relates the conditioning of self-attention Jacobians to the spectral properties of the query, key, and value matrices in Transformer architectures. 
Building on this insight, we proposed a simple initialization strategy, conditioned initialization, that aims to reduce the condition number of the attention Jacobian at initialization, thereby providing a more favorable inductive bias for optimization. 
Extensive experiments show that this approach consistently improves performance across a wide range of Transformer models and tasks, including image classification, object detection, language modeling, and long-range sequence learning.
\footnote{Digital tools were used for grammar and formatting only. No large language models contributed to the research, and all findings are original work by the authors.}

\section*{Acknowledgments}
Both authors acknowledge support from Commonwealth Bank of Australia through the CommBank Centre for Foundational AI Research. This funding was essential for the completion of the research described in this publication.

%\subsubsection*{Acknowledgments}
%Use unnumbered third level headings for the acknowledgments. All
%acknowledgments, including those to funding agencies, go at the end of the paper.

\bibliography{iclr2026_conference}
\bibliographystyle{iclr2026_conference}

\newpage
\appendix
\section{Appendix}\label{sec:app}

\section*{Ethics Statement}
This work relies solely on publicly available datasets and does not involve human subjects, personally identifiable information, or sensitive data. The proposed methods are developed exclusively to advance fundamental research in machine learning.

\section*{Reproducibility Statement}
All experiments in this work were designed with reproducibility in mind. References are provided for any external codebases employed, and full details of training protocols and hardware are described in the appendix. Complete proofs of all theoretical results are also included to allow independent verification.  

\section*{Use of LLMs}
This manuscript was prepared with the assistance of LLMs for grammar checking only. No language models were used in conducting the research or drafting the scientific content.

\subsection{Theoretical Analysis}\label{app:theory}

In this section, we give the proofs of the lemma and theorems from \cref{sec:main_theory} and the proposition from \cref{sec:spec_cond_init}.

\paragraph{Notation.} For the convenience of the reader we restate the notation we used in \cref{sec:theory}.

Given a matrix $Z \in \R^{m\times n}$ we denote the vectorization of $Z$ by $\mathrm{vec}(Z) \in \R^{mn\times 1}$ \citep{magnus2019matrix}. Note that for such a matrix
there is a transformation $T_{mn} \in \R^{mn \times mn}$ such that
$T_{mn}\mathrm{vec}(Z) = \mathrm{vec}(Z^T)$ where $Z^T$ denotes the transpose of $Z$. The matrix $T_{mn}$ is known as a commutation matrix and is a permutation matrix \citep{magnus2019matrix}. 
The maximum singular value of a matrix $Z$ will be denoted by $\sigma_{\max}(Z)$ and the minimum singular value by $\sigma_{\min}(Z)$. We will use the standard terminology SVD to denote the singular value decomposition of a matrix. Given a vector $v \in \R^n$ the notation $||v||_2$ will denote the vector 2-norm of $v$. Finally, we will let $I_{m \times n}$ denote the rectangular identity matrix that has all $1$'s on its main diagonal. When we are dealing with square matrices we will often simply write $I_n$ with the understanding that $I_n$ is the $n \times n$ square identity matrix. Context will make it clear whether we are in the square or non-square regime. We also let
$\mathcal{O}_{m \times n}$ as the real $m \times n$ semi-orthogonal matrices.

We start with the following standard facts on derivatives of matrices. We point out to the reader that \cite{qi2025taming} also computes various Jacobians of the self-attention layer using Kronecker factorizations however their computations are slightly different to ours and we therefore give explicit details of how to obtain the proofs of \cref{lem:softmax_deriv} and \cref{thm:attn_derivs}.

\begin{lemma}\label{lem:matrix_deriv}
Let $A \in R^{n\times m}$, $B \in \R^{k\times l}$ and 
$C \in \R^{m\times k}$. Then 
\begin{equation}
    \frac{\partial{ACB}}{\partial{C}} = B^T \otimes A.
\end{equation}
\end{lemma}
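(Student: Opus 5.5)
The plan is to verify the identity entrywise via the vectorization convention used throughout the paper: the derivative $\partial(ACB)/\partial C$ is the matrix $M$ with $\mathrm{vec}(ACB) = M\,\mathrm{vec}(C)$. The map $C \mapsto ACB$ is linear, so its derivative is the constant matrix representing it in vectorized coordinates. The statement therefore reduces to the classical identity $\mathrm{vec}(ACB) = (B^T\otimes A)\,\mathrm{vec}(C)$ from \cite{magnus2019matrix}, and I would prove that identity directly.

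First, fix the column-stacking convention: $\mathrm{vec}(C)$ stacks the columns $c_1,\dots,c_k \in \R^m$ of $C$. Write the $j$-th column of $ACB$ as
\[
(ACB)e_j = A C b_j = A\sum_{i=1}^k B_{ij} c_i = \sum_{i=1}^k B_{ij}\,A c_i ,
\]
where $b_j$ is the $j$-th column of $B$. Stacking these $l$ columns gives a block vector whose $j$-th block is $\sum_i B_{ij} A c_i$. This is exactly the $j$-th block row of $(B^T\otimes A)\mathrm{vec}(C)$, since the $(j,i)$ block of $B^T\otimes A$ is $(B^T)_{ji}A = B_{ij}A$.

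Hence $\mathrm{vec}(ACB) = (B^T\otimes A)\mathrm{vec}(C)$ for every $C$. Because the map is linear, its Fréchet derivative at any $C$ equals this matrix, which gives $\partial(ACB)/\partial C = B^T\otimes A$. The dimensions are consistent: $B^T\otimes A \in \R^{ln\times km}$, mapping $\mathrm{vec}(C)\in\R^{mk}$ to $\mathrm{vec}(ACB)\in\R^{nl}$.

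The only real obstacle is bookkeeping, not mathematics. The conventions must be stated consistently so they match their later use in \cref{thm:attn_derivs}: column-major $\mathrm{vec}$, and the Jacobian arranged with output in the rows and input in the columns. With a row-major convention the factors would swap to $A\otimes B^T$. I would state the convention explicitly at the start so that the later Kronecker factors $(W_V^TX^T\otimes I_N)$ and $(XW_K\otimes X)$ follow from repeated application of this lemma together with the chain rule.
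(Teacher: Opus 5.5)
Your proposal is correct and takes the same route as the paper: both reduce the lemma to the identity $\mathrm{vec}(ACB) = (B^T\otimes A)\,\mathrm{vec}(C)$ and then read off the derivative of this linear map in vectorized coordinates. The only difference is that the paper cites the identity from Magnus and Neudecker, whereas you verify it column by column; your block computation $(ACB)e_j=\sum_i B_{ij}\,Ac_i$ and your dimension count are both correct.
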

\begin{proof}
We start by using a well known vectorization identity \citep{magnus2019matrix}
\begin{equation}\label{eqn:vec_identity}
    \mathrm{vec}(ACB) = (B^T \otimes A)\mathrm{vec}(C)
\end{equation}
where $\mathrm{vec}$ denotes the vectorization operator which takes a matrix and maps it to a vector by stacking its columns on top of each other, see \cite{magnus2019matrix}. We then differentiate \cref{eqn:vec_identity} to obtain
\begin{equation}
\frac{\partial{\mathrm{vec}(ACB)}}{\partial{\mathrm{vec}(C)}} = 
B^T \otimes A.
\end{equation}
The result of the lemma follows.
\end{proof}

\begin{lemma}\label{lem:transpose_deriv}
Let $A \in \R^{n\times m}$ so that $A^T \in \R^{m\times n}$. Then
\begin{align}
    \mathrm{vec}(A^T) &= T_{mn}\mathrm{vec}(A) \\
    \frac{\partial{\mathrm{vec}(A^T)}}{\partial{\mathrm{vec}(A)}} &= T_{mn}
\end{align}
where $T_{mn}$ is a commutation matrix.
\end{lemma}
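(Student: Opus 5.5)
The statement is \cref{lem:transpose_deriv}: transposition of an $n\times m$ matrix is a fixed linear map on $\mathrm{vec}(A)$. The natural approach is to build that map explicitly as a permutation matrix and then read off the derivative.

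\textbf{Step 1: the linear map.} The map $A \mapsto A^T$ is linear. Since $\mathrm{vec}$ is a linear isomorphism from $\R^{n\times m}$ onto $\R^{nm}$, the composite $\mathrm{vec}(A) \mapsto \mathrm{vec}(A^T)$ is linear on $\R^{nm}$. It is therefore given by a unique matrix, which we call $T_{mn}$.

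\textbf{Step 2: $T_{mn}$ is a permutation matrix.} Use the column-stacking convention, so that
\[
\mathrm{vec}(A)_{(j-1)n+i} = A_{ij},
\qquad
\mathrm{vec}(A^T)_{(i-1)m+j} = (A^T)_{ji} = A_{ij}.
\]
Hence $\mathrm{vec}(A^T)$ is a reordering of the entries of $\mathrm{vec}(A)$. This lets us write $T_{mn}$ concretely:
\[
T_{mn} = \sum_{i=1}^n\sum_{j=1}^m (e_j e_i^T)\otimes(e_i e_j^T),
\]
where $e_i$ and $e_j$ are standard basis vectors in $\R^n$ and $\R^m$. Equivalently, $T_{mn}$ is the permutation sending index $(j-1)n+i$ to $(i-1)m+j$. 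One can check the identity directly by applying $T_{mn}$ to a basis element $\mathrm{vec}(e_ie_j^T)$. Alternatively, apply the identity $\mathrm{vec}(ACB)=(B^T\otimes A)\mathrm{vec}(C)$ from \cref{lem:matrix_deriv} to each term $e_j e_i^T A\, e_j e_i^T$. This gives the first displayed equation, and shows that $T_{mn}$ is a commutation (permutation) matrix in the sense of \citet{magnus2019matrix}.

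\textbf{Step 3: the derivative.} Because $\mathrm{vec}(A^T) = T_{mn}\mathrm{vec}(A)$ with $T_{mn}$ independent of $A$, the function is linear in $\mathrm{vec}(A)$. Its Jacobian is therefore the constant matrix $T_{mn}$. This is the same reasoning as in \cref{lem:matrix_deriv}, where the derivative is obtained by differentiating a vectorization identity.

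\textbf{Main obstacle.} There is no real difficulty. The only care needed is in the index bookkeeping of Step 2: the subscript order $mn$ must be consistent with the column-stacking convention, so that the sizes of $T_{mn}$ (an $nm\times nm$ matrix) match.
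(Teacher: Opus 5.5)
Your proof follows the paper's route with more detail. The paper says the first identity holds by the definition of the transpose ($T_{mn}$ is introduced in the preliminaries as the matrix with $T_{mn}\mathrm{vec}(Z)=\mathrm{vec}(Z^T)$), and that the second follows because the map is linear. Those are your Steps 1 and 3, and both are fine.

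There is, however, a concrete error in Step 2, in the index bookkeeping you flagged as the only delicate point: your closed form has the Kronecker factors in the wrong order. With $e_i\in\R^n$ and $e_j\in\R^m$, the term $(e_je_i^T)\otimes(e_ie_j^T)$ has its single nonzero entry in row $(j-1)n+i$ and column $(i-1)m+j$. So your sum maps $\mathrm{vec}(A^T)$ to $\mathrm{vec}(A)$, i.e.\ it equals $T_{mn}^{-1}=T_{mn}^T$ rather than $T_{mn}$.

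These two matrices differ whenever $m\neq n$ and $m,n\ge 2$. For example, take $n=2$, $m=3$ and write $a_{ij}$ for the entries of $A$. Your matrix sends $\mathrm{vec}(A)$ to $(a_{11},a_{22},a_{21},a_{13},a_{12},a_{23})^T$ instead of $\mathrm{vec}(A^T)=(a_{11},a_{12},a_{13},a_{21},a_{22},a_{23})^T$. The formula also contradicts your own verbal description (input index $(j-1)n+i$ goes to output index $(i-1)m+j$), which is the correct one.

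The check you propose catches the mistake. Write $A^T=\sum_{i,j}(e_je_i^T)\,A\,(e_je_i^T)$ and apply $\mathrm{vec}(PCQ)=(Q^T\otimes P)\mathrm{vec}(C)$ with $P=Q=e_je_i^T$. This gives $T_{mn}=\sum_{i,j}(e_ie_j^T)\otimes(e_je_i^T)$, with the factors in the opposite order to yours.

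Step 1 already produces $T_{mn}$, and the index map alone shows it is a permutation matrix. So you can either correct the order of the factors or drop the closed form entirely; nothing else in your argument depends on it.
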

\begin{proof}
The first equation follows from the definition of the transpose of a matrix \citep{magnus2019matrix}. The second equation then follows from the first.
\end{proof}

The proof of \cref{lem:softmax_deriv} is given as follows.

\begin{proof}[Proof of \cref{lem:softmax_deriv}]
Let $[z_1,\ldots ,z_n]$ denote a row vector in $\R^n$. Then by definition 
\begin{equation}
\mathrm{softmax}([z_1,\ldots ,z_n]) = 
\bigg{[}\frac{e^{z_1}}{\sum_{i=1}^ne^{z_i}},\ldots ,\frac{e^{z_n}}{\sum_{i=1}^ne^{z_i}}\bigg{]}. 
\end{equation}
From the above equation we can compute the partial derivative and find
\begin{equation}
\frac{\partial \, \mathrm{softmax}(z)_i}{\partial z_j} = \mathrm{softmax}(z)_i \left( \delta_{ij} - \mathrm{softmax}(z)_j \right).
\end{equation}
The term $\frac{\partial \, \mathrm{softmax}(z)_i}{\partial z_j}$ is precisely the $ij$ component of the matrix 
$\frac{\partial \mathrm{softmax}(z)}{\partial z}$. Putting each of these $ij$ terms into an $n \times n$ matrix we find
\begin{equation}
\frac{\partial \mathrm{softmax}(z)}{\partial z} = 
Diag(z) - z\cdot z^T
\end{equation}
which proves the lemma.
\end{proof}

We can use the above lemmas to give the proof of \cref{thm:attn_derivs}.

\begin{proof}[Proof of \cref{thm:attn_derivs}]
We start by establishing the derivative formula for the term 
$\frac{\partial\mathrm{A}(X)}{\partial W_Q}$. We will use the notation used in \cref{sec:prelims}. Note that by definition
$\frac{\partial\mathrm{A}(X)}{\partial W_Q} \in \R^{dN\times dD}$.
Using \cref{eqn:attn_eqn_general}
\begin{equation}\label{app:eqn:attn_eqn_general}
    \mathrm{A}(X) = I_N\mathrm{softmax}(XW_QW_K^TX^T)XW_V
\end{equation}
where $I_N$ is the $N \times N$ identity matrix. This is done so that we can apply \cref{lem:matrix_deriv}. We then compute
\begin{align}
\frac{\partial\mathrm{A}(X)}{\partial W_Q} &= 
\frac{\partial (I_N\mathrm{softmax}(XW_QW_K^TX^T)XW_V)}{\partial W_Q} \\
&= (W_V^TX^T\otimes I_N)\frac{\partial \mathrm{softmax}(XW_QW_K^TX^T)}{\partial W_Q} \text{ using } \cref{lem:matrix_deriv} \\
&=  (W_V^TX^T\otimes I_N) \Lambda(\mathrm{softmax}(XW_QW_K^TX^T))\frac{\partial (XW_QW_K^TX^T)}{\partial W_Q} 
\text{ using } \cref{lem:softmax_deriv} \\
&= (W_V^TX^T\otimes I_N) \Lambda(\mathrm{softmax}(XW_QW_K^TX^T))
(XW_K\otimes X)
\end{align}
which proves the firs equality in \cref{thm:attn_derivs}. 

To compute $\frac{\partial \mathrm{A}(X)}{\partial W_K} \in \R^{dN\times dD}$ we proceed in a similar way. 
\begin{align}
\frac{\partial \mathrm{A}(X)}{\partial W_K} &= 
\frac{\partial (I_N\mathrm{softmax}(XW_QW_K^TX^T)XW_V)}{\partial W_K} \\
&= 
(W_V^TX^T\otimes I_N)\frac{\partial \mathrm{softmax}(XW_QW_K^TX^T)}{\partial W_K} \text{ using } \cref{lem:matrix_deriv} \\
&=
(W_V^TX^T\otimes I_N) \Lambda(\mathrm{softmax}(XW_QW_K^TX^T))\frac{\partial (XW_QW_K^TX^T)}{\partial W_K} 
\text{ using } \cref{lem:softmax_deriv} \\
&=
(W_V^TX^T\otimes I_N) \Lambda(\mathrm{softmax}(XW_QW_K^TX^T))
(X\otimes XW_Q)T_{Dd} 
\end{align}
where the last equality follows from \cref{lem:matrix_deriv,lem:transpose_deriv}
This establishes the second equality in \cref{thm:attn_derivs}.

To prove the identity for $\frac{\partial\mathrm{A}(X)}{\partial W_V} \in \R^{dN\times dD}$ we write
\begin{equation}
\mathrm{A}(X) = \mathrm{softmax}(XW_QW_K^TX^T)XW_VI_d    
\end{equation}
where $I_d$ is the $d \times d$ identity matrix. Then we
simply apply \cref{lem:matrix_deriv} to obtain
\begin{align}
\frac{\partial\mathrm{A}(X)}{\partial W_V} &= 
\frac{\partial (\mathrm{softmax}(XW_QW_K^TX^T)XW_VI_d)}{\partial W_V} \\
&= I_d \otimes \mathrm{softmax}(XW_QW_K^TX^T)X
\end{align}
which proves the final equality in \cref{thm:attn_derivs}. 
\end{proof}

We also give the proof of \cref{thm:cond_jac}.

\begin{proof}[Proof of \cref{thm:cond_jac}]
The proof of \cref{thm:cond_jac} follows from using \cref{thm:attn_derivs} and the definition of the Jacobian of $\mathrm{A}(X)$ with respect to $W_Q$, $W_K$ and $W_V$ given by
\begin{equation}
 J(\mathrm{A}(X)) =  
 \bigg{[}\frac{\partial{(\mathrm{A}(X))}}{\partial{W_Q}}, 
 \frac{\partial{(\mathrm{A}(X))}}{\partial{W_K}}, 
 \frac{\partial{(\mathrm{A}(X))}}{\partial{W_V}}\bigg{]}^T.  
\end{equation}
We recall that the condition number is defined as 
$\kappa(J(\mathrm{A}(X))) = \frac{\sigma_{\max}(J(\mathrm{A}(X)))}{\sigma_{\min}(J(\mathrm{A}(X)))}$ where 
$\sigma_{\max}(J(\mathrm{A}(X)))$ is the maximum singular value of $J(\mathrm{A}(X))$ and 
$\sigma_{\min}(J(\mathrm{A}(X)))$ the minimum singular value which we know is non-zero because of the assumption that $J(\mathrm{A}(X))$ has full rank.
Note that, using the notation in \cref{sec:prelims}, we have that
$J(\mathrm{A}(X)) \in \R^{3dN \times dD}$ as 
$\frac{\partial\mathrm{A}(X)}{\partial W_Q}$, $\frac{\partial \mathrm{A}(X)}{\partial W_K}$, $\frac{\partial\mathrm{A}(X)}{\partial W_V} \in \R^{dN \times dD}$. For each of notation we will write 
$\mathrm{A}_Q := \frac{\partial\mathrm{A}(X)}{\partial W_Q}$, 
$\mathrm{A}_K := \frac{\partial \mathrm{A}(X)}{\partial W_K}$, 
$\mathrm{A}_V := \frac{\partial\mathrm{A}(X)}{\partial W_V}$.

We will start by computing a bound for the maximum singular value. We have for any vector $z \in \R^{dD}$ we have
\begin{align}
||J(\mathrm{A}(X))z||_2^2 &= 
 \left\lVert\bigg{[}\frac{\partial{(\mathrm{A}(X))}}{\partial{W_Q}}(z), 
 \frac{\partial{(\mathrm{A}(X))}}{\partial{W_K}}(z), 
 \frac{\partial{(\mathrm{A}(X))}}{\partial{W_V}}(z)\bigg{]}^T\right\rVert_2^2 \\
&=  \left\lVert\big{[}\mathrm{A}_Q(z), 
 \mathrm{A}_K(z), 
 \mathrm{A}_V(z)\big{]}\right\rVert_2^2 \\
&= 
\left\lVert \mathrm{A}_Q(z)\right\rVert_2^2 + 
\left\lVert \mathrm{A}_K(z)\right\rVert_2^2 +
\left\lVert \mathrm{A}_V(z)\right\rVert_2^2 \\
&\leq 
\left(\sigma_{\max}(\mathrm{A}_Q)^2 + \sigma_{\max}(\mathrm{A}_K)^2 + 
\sigma_{\max}(\mathrm{A}_V)^2\right)||z||_2^2. 
\end{align}
This implies that
\begin{align}
\sigma_{\max}(J(\mathrm{A}(X))) &:= \max_{z\neq 0}\frac{||J(\mathrm{A}(X))z||_2^2}{||z||_2^2} \\   
&\leq
\sqrt{\sigma_{\max}(\mathrm{A}_Q)^2 + \sigma_{\max}(\mathrm{A}_K)^2 + 
\sigma_{\max}(\mathrm{A}_V)^2} \\
&\leq \sigma_{\max}(\mathrm{A}_Q) + \sigma_{\max}(\mathrm{A}_K) + 
\sigma_{\max}(\mathrm{A}_V).
\end{align}
The next step is to compute a lower bound for the minimum singular value $\sigma_{\min}(J(\mathrm{A}(X)))$. The approach is similar to the above, using the fact that 
$\sigma_{\min}(J(\mathrm{A}(X))) = 
\min_{||z||_2 = 1}J(\mathrm{A}(X)(z)$. We can then use the inequality
\begin{align}
\left\lVert J(\mathrm{A}(X)(z)\right\rVert_2^2 &= 
\left\lVert
\bigg{[}\frac{\partial{(\mathrm{A}(X))}}{\partial{W_Q}}(z), 
 \frac{\partial{(\mathrm{A}(X))}}{\partial{W_K}}(z), 
 \frac{\partial{(\mathrm{A}(X))}}{\partial{W_V}}(z)\bigg{]}^T\right\rVert_2^2 \\
&\geq
\max\left\{\left\lVert\frac{\partial{(\mathrm{A}(X))}}{\partial{W_Q}}(z) \right\rVert,
\left\lVert\frac{\partial{(\mathrm{A}(X))}}{\partial{W_K}}(z) \right\rVert,
\left\lVert\frac{\partial{(\mathrm{A}(X))}}{\partial{W_V}}(z) \right\rVert
\right\} 
\end{align}
Then minimizing the above over the constraint $z \in \R^{dD}$ such that $||z|| = 1$ we obtain
\begin{align}
\sigma_{\min}(J(\mathrm{A}(X))) \geq 
\max\left\{
\sigma_{\min}\left(\frac{\partial{(\mathrm{A}(X))}}{\partial{W_Q}}\right), 
\sigma_{\min}\left(\frac{\partial{(\mathrm{A}(X))}}{\partial{W_K}}\right), 
\sigma_{\min}\left(\frac{\partial{(\mathrm{A}(X))}}{\partial{W_V}}\right)
\right\}.
\end{align}
Combing the bounds on $\sigma_{\max}(J(\mathrm{A}(X)))$ and 
$\sigma_{\min}(J(\mathrm{A}(X)))$ we obtain
\begin{align}
\kappa(J(\mathrm{A}(X))) \leq \kappa\left(\frac{\partial{(\mathrm{A}(X))}}{\partial{W_Q}}\right) + 
\kappa\left(\frac{\partial{(\mathrm{A}(X))}}{\partial{W_K}}\right)
+
\kappa\left(\frac{\partial{(\mathrm{A}(X))}}{\partial{W_V}}\right).
\end{align}
The final step is to get a bound on the condition numbers for each term on the right hand side in the above inequality. This is done using two facts: Firstly, given two matrices $C$ and $D$ such that 
the product $CD$ is full rank then $\kappa(CD) \leq \kappa(C)\kappa(D)$ and the second that $\kappa(C \otimes D) = 
\kappa(C)\kappa(D)$. Using these two facts we can then use 
\cref{thm:attn_derivs} to obtain the bound of \cref{thm:cond_jac} and the proof is finished.
\end{proof}

Using \cref{thm:cond_jac} the proof of proposition is straightforward.

\begin{proof}[Proof of \cref{prop:u_bound_init}.]
We first observe that by definition of the condition number of a general $m \times n$ matrix $M$ must always satisfy $\kappa(M) \geq 1$. For matrices with $1$'s on the diagonal and zero elsewhere the condition number is $1$. If $M$ is a semi-orthogonal matrix then either
\begin{align}
    &1. \text{ } MM^T = I_{m\times m} \text{ if } m \geq n \\
    &2. \text{ } M^TM = I_{n\times n} \text{ if } n \geq m.
\end{align}
Since the singular values of $M$ are precisely the eigenvalues of $MM^T$ if $m \geq n$ or $M^TM$ if $n \geq m$. It follows that the singular values of $M$ must all be $1$ and hence it has condition number $1$.

Therefore, if the $W_Q$, $W_K$ and $W_V$ matrices are initialized as $I_{D\times d}$ or as a matrix in 
$\mathcal{O}_{D\times d}$ we have that their condition number is $1$. Therefore, we must have
\begin{equation}
    \mathcal{B}(\overline{\mathrm{A}}) \leq \mathcal{B}(\mathrm{A})
\end{equation}
and the proposition is proved.
\end{proof}

\begin{remark}
In the implementation strategy in \cref{sec:main_theory} we saw that we initialized the values matrix $W_V$ with a rectangular identity matrix $I_{D\times d}$. However, from the theory of that section we could have also initialized it with $\lambda I_{D\times d}$ for $\lambda \neq 0$. In general, we found empirically that this could be done but that if $\lambda$ got too large we noticed some instability in training due to $W_V$ having weights that were too large. Therefore, opting for the identity $I_{D\times d}$ was what we found worked well.
\end{remark}

\paragraph{Why Conditioning the Jacobian Aids Optimization.}
The rationale for improving the conditioning of the self-attention Jacobian is connected to established results on the Neural Tangent Kernel (NTK). Prior work, see \cite{liu2022loss}, has shown that better-conditioned NTKs lead to faster and more reliable convergence to a global minima during gradient-based optimization. Since the singular values of a network's Jacobian correspond to the positive square roots of the eigenvalues of its NTK, improving the conditioning of the Jacobian directly enhances the conditioning of the NTK. This connection provides a theoretical basis for why controlling the spectral structure of the self-attention Jacobian can benefit optimization. Furthermore, recent extensions of NTK theory to transformer architectures (e.g., Yang 2020) support the relevance of these insights in the attention setting. Our initialization scheme leverages this relationship by explicitly targeting improved conditioning at initialization, which is consistent with the empirical performance gains observed across tasks.

\subsubsection{Implementation details}\label{sec:imp}

In this section, we give the details of how we implement the semi-orthogonal initialization of the $W_Q$ and $W_V$ matrix of each head.

We recall from \cref{sec:spec_cond_init} that our initialization for the queries and keys, proceeded by initializing each $W_Q^{(i)}$ and $W_K^{(i)}$ in the $i$-th head with independent semi-orthogonal projections,
\[
(W_Q^{(i)})^T W_Q^{(i)} = I_d, 
\qquad 
(W_K^{(i)})^T W_K^{(i)} = I_d,
\]
for $i=1,\ldots,h$, where $h$ is the number of heads. To do this suppose each $W_Q^{(i)}$ and $W_K^{(i)}$ are $D \times d$ and let $r = \min(D, d)$ for each head we form two random matrices $R^Q_i$ and $R^K_i$ of shape $D \times d$ and  then take the truncated SVD
\begin{equation}
  U^Q_i(r)S^Q_i(r)(V^Q_i)^T(r) \text{ and }
    U^K_i(r)S^K_i(r)(V^K_i)^T(r)  
\end{equation}
where each of the  $U^Q_i(r) \in \sR^{D \times r}$ 
and $(V^Q_i)(r) \in \sR^{r \times d}$ and similarly for the $K$ ones. We then observe that 
\begin{equation}
    O_i^{Q} := U^Q_i(r)\cdot(V^Q_i)^T(r) \text{ and }
    O_i^{K} := U^K_i(r)\cdot(V^K_i)^T(r) 
\end{equation}
are semi-orthogonal. Doing this for each different head we get different semi-orthogonal matrices for each $W_Q$ and $W_K$ for each head. The implementation of $W_V$ is the same for each head and is simply done by fixing $W_V = I_{D\times d}$.

\begin{remark}
We note that in the above we used the SVD to obtain the initializations of each $W_Q$ and $W_V$. One can also use the QR decomposition \citep{magnus2019matrix} and PyTorch has a built in way to do this via \texttt{nn.init.orthogonal}.
\end{remark}

\subsection{Discussion on Normalization}
This section provides additional clarification on how normalization and stabilization mechanisms interact with the Jacobian conditioning analysis and with the proposed conditioned initialization scheme. Several modern transformer architectures apply normalization directly to the queries, keys, or values (e.g., RMSNorm, QKNorm), while Layer Normalization (LN) is typically applied to the inputs of each block. The discussion below outlines how these components relate to the theoretical results in \cref{sec:theory} and to the practical behaviour observed in \cref{sec:exps}.

\medskip
\noindent\textbf{Theoretical setting.}
In the original self-attention formulation \cite{vaswani2017attention}, no normalization is applied directly to the query, key, or value weight matrices. The only modification is the fixed scaling factor $1/\sqrt{d}$, determined by the head dimension. Since this factor does not depend on the model parameters, differentiating the attention map simply scales the Jacobian by a constant. A constant scalar multiplication does not change the condition number of a matrix; therefore, this scaling has no effect on the conditioning analysis developed in Section~3. The derivations for vanilla self-attention thus remain mathematically correct and serve as a foundation for the initialization scheme proposed later.

\medskip
\noindent\textbf{Compatibility with modern normalization layers.}
Many contemporary transformer variants incorporate normalization directly into the attention pathway, for example through RMSNorm or QKNorm applied to queries, keys, or values. The conditioned initialization introduced in this paper is designed to operate on top of these mechanisms rather than as a replacement for them. In all experiments, the architectures retain their original normalization layers exactly as implemented in the publicly released codebases. For example, the DeiT-B model applies QKNorm to both queries and keys. When applying conditioned initialization, these QKNorm layers remain in place. This ensures that comparisons in \cref{sec:exps} are consistent with the established baseline implementations in the literature.

\medskip
\noindent\textbf{Normalization versus conditioning.}
Normalization and conditioning address different aspects of stability. Normalization controls the scale of activations and gradients, whereas conditioning concerns the spectral structure of the Jacobian. These effects are distinct. A matrix may have small norm but poor conditioning, or it may have large norm yet be well-conditioned. For example,
\[
A = \begin{bmatrix} 1 & 0 \\[3pt] 0 & 1 \end{bmatrix}, \qquad
B = \begin{bmatrix} 10.1 & 0 \\[3pt] 0 & 10 \end{bmatrix}
\]
have very different Frobenius norms but nearly identical condition numbers, while
\[
C = \begin{bmatrix} 1 & 0 \\[3pt] 0 & 1 \end{bmatrix}, \qquad
D = \begin{bmatrix} \sqrt{2} & 0 \\[3pt] 0 & 0.1 \end{bmatrix}
\]
have similar Frobenius norms but condition numbers $1$ and approximately $14.14$, respectively. Normalization schemes such as LN, RMSNorm, and QKNorm regulate magnitudes, while conditioned initialization directly shapes the spectral behaviour of the Jacobian. These mechanisms therefore complement one another.

%\medskip
%\noindent\textbf{Additional empirical observations.}
%To further illustrate the interaction between initialization and normalization, we include two sets of experiments. The first compares several initialization strategies both with and without LN applied to the inputs. Removing LN makes all models significantly harder to train, yet conditioned initialization consistently yields the best performance among the methods tested. The second set evaluates the effect of RMSNorm and QKNorm within the attention mechanism. Across all architectures (ViT-B, DeiT-B, Swin-B, XCiT-M, DaViT-B) and across all normalization settings (none, RMSNorm, QKNorm), conditioned initialization improves performance relative to the baselines.

%These results indicate that the proposed initialization scheme is compatible with and complementary to existing normalization techniques, and that it provides consistent improvements regardless of whether LN, RMSNorm, or QKNorm is present in the architecture.

\paragraph{Ablation on Partial Initialization.}
We conducted several ablation studies in which the conditioned initialization was applied to only a subset of the attention projection matrices, such as initializing only $W_Q$, only $W_K$, or only $W_V$. An initial hypothesis was that initializing $W_V$ alone might be sufficient, since the softmax operation effectively performs a row-wise normalization on the attention scores. However, normalization and conditioning target fundamentally different properties: a matrix may have small norm yet still exhibit a large condition number. This observation led us to a more complete analysis, formalized in \cref{thm:cond_jac}, which shows that the conditioning of the self-attention Jacobian depends on the joint spectral structure of $W_Q$, $W_K$, and $W_V$. Consistent with this theoretical insight, we found that applying conditioned initialization to all three projections yields the most stable behaviour and the strongest empirical performance.

\paragraph{Discussion on the Output Projection.}
We also examined whether the conditioned initialization should be applied to the output projection matrix $W_O$, given its close relationship to $W_V$. Empirically, we found that once $W_V$ is initialized using the proposed scheme, the initialization of $W_O$ has negligible effect. Applying the conditioned initialization to $W_O$ in addition to $W_V$ did not alter training behaviour or final performance. Consequently, our method focuses on conditioning the query, key, and value projections, while leaving $W_O$ with its standard initialization.

\subsection{Experiments}\label{app:exps}

\subsubsection{Vision Transformers}\label{app:vision_trans}

\paragraph{Hardware and implementation.} The image classification experiments in \cref{subsec:IC} of the paper were done on Nvidia A100 GPUs.
The implementation of the ViTs was all done using the Timm code base \citep{rw2019timm}. The architectures were all trained from scratch on the ImageNet-1k dataset using the AdamW optimizer following the hyperparameters used in the original papers \citep{dosovitskiy2020image, steiner2106train, liu2021swin,ali2021xcit,touvron2021training,ding2022davit}. For the case of ViT-T on the Pets, Flowers, CIFAR-10 and CIFAR-100 datasets we used \cite{rw2019timm} for the architecture and the training hyperparameters from \cite{xu2023initializing}.

\paragraph{Optimization analysis for DaViT-B.} As shown in \cref{tab:vits}, conditioned initialization achieves higher accuracy under the same training regime compared to both default and mimetic initialization on the DaViT-B architecture.
To further assess training efficiency, we measured the number of epochs required by mimetic and conditioned initialization to match the final accuracy attained by the default initialization across on the DaViT-B architecture. 
\Cref{fig:davit_converge} reports these results, demonstrating that conditioned initialization consistently converges to approximately 25\% faster to the same accuracy level.

\begin{figure}[ht!]
    \centering
    \includegraphics[width=0.6\linewidth]{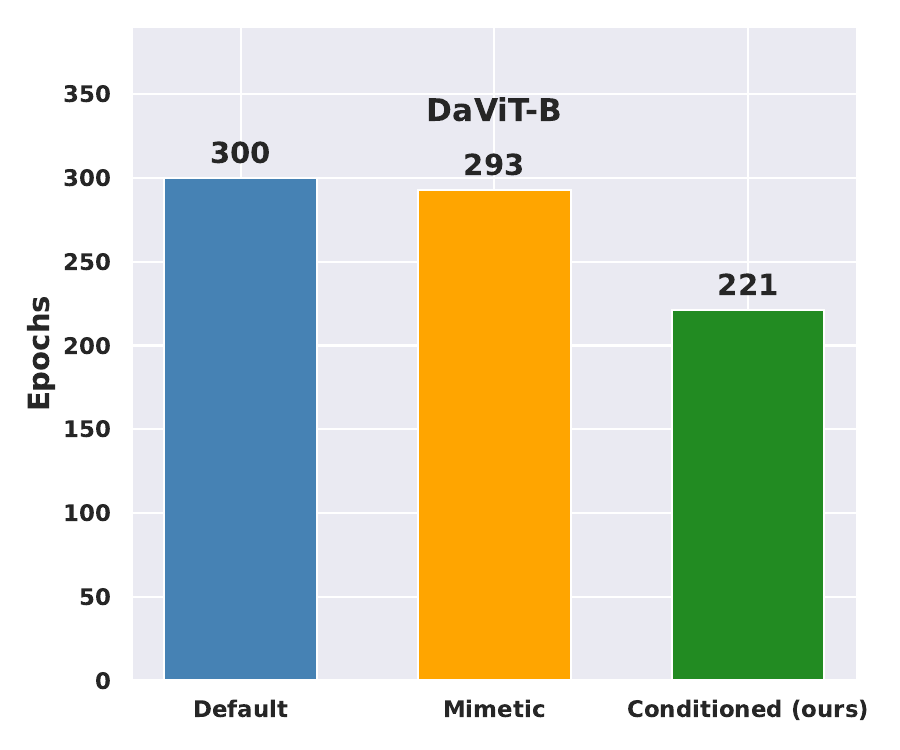}
    \caption{Total number of epochs required for each initialization to reach the final accuracy of the default initialization reported in \cref{tab:vits} for DaViT-B architecture. Conditioned initialization converges more quickly, requiring fewer epochs than both default and mimetic initialization.}
    \label{fig:davit_converge}
\end{figure}

\paragraph{Weight Selection} Xu et al.~\citep{xu2023initializing} introduced the weight selection method, showing that weights transferred from an ImageNet-21K-pretrained ViT-Small (ViT-S) model provide a strong inductive bias for initializing a ViT-Tiny (ViT-T) on small-scale datasets. 
Here, we examine whether weight selection remains effective when pretraining is performed on ImageNet-1K. 
To this end, we pretrained ViT-S models on ImageNet-1K under three initialization schemes: default truncated normal~\citep{timm2025}, mimetic~\citep{trockman2023mimetic}, and our conditioned initialization from \cref{sec:spec_cond_init}. 
Using the weight selection procedure~\citep{xu2023initializing}, we then derived corresponding ViT-T initializations and trained them on Food-101~\citep{bossard2014food}, CIFAR-10, and CIFAR-100~\citep{krizhevsky2009learning}. 
\Cref{tab:ws} summarizes the results. 
Each entry labeled ``ImageNet-1K" indicates that the ViT-S model was pretrained on ImageNet-1K with the specified initialization before its weights were transferred to ViT-T via weight selection. 
For comparison, the final row reports the performance of a ViT-T initialized from an ImageNet-21K-pretrained ViT-S with default truncated normal initialization. 
The results highlight a key finding: replacing ImageNet-21K with ImageNet-1K can yield comparable performance, provided that the initialization is chosen carefully. 
In particular, conditioned initialization on ImageNet-1K achieves accuracy on par with default initialization pretrained on ImageNet-21K, underscoring its effectiveness in data-limited pretraining regimes.

\begin{table*}[!ht]
\caption{Comparison of ViT-T pretrained on four small scale datasets with three different initializations. We report Top-1\% classification accuracy. In each case, conditioned initialization improves performance over the default and mimetic initializations. }
\label{tab:ws}
\centering
\begin{tabular}{c|c c c }
    \toprule
    \rowcolor{gray!10}
    & Food-101 & CIFAR-10 & CIFAR-100 \\
    \midrule
    ImageNet-1k + Default  & 85.5  & 96.6 & 79.7 \\
    \midrule
    %\rowcolor{lightpink}
    ImageNet-1k + Mimetic  & 86.4  & 96.3 & 79.9 \\
    \midrule
    %\rowcolor{lightgreen}
    ImageNet-1k + Conditioned (ours) & \textbf{87.3} & \textbf{97.1} & 81.0  \\
    \midrule
    \midrule
    %\rowcolor{lightgreen}
    ImageNet-21k + Default & 87.1 & \textbf{97.1}  & \textbf{81.1}  \\
    \bottomrule
\end{tabular}
\end{table*}

\paragraph{Training loss curves.} We plot the training curves for each of the vision transformer experiments with each different initialization. In \cref{fig:vit_deit_loss}, \cref{fig:swin_xcit_loss}, \cref{fig:davit_loss} we plot the training loss for the different initializations for the ViT-B, DeiT-B, Swin-B, XCiT-M and DaViT-B architectures respectively. In each case, we see conditioned initialization has trains stably and converges faster when compared to the default and Mimetic initializations.

\begin{figure}[ht!]
    \centering
    \includegraphics[width=0.48\linewidth, height=0.22\textheight]{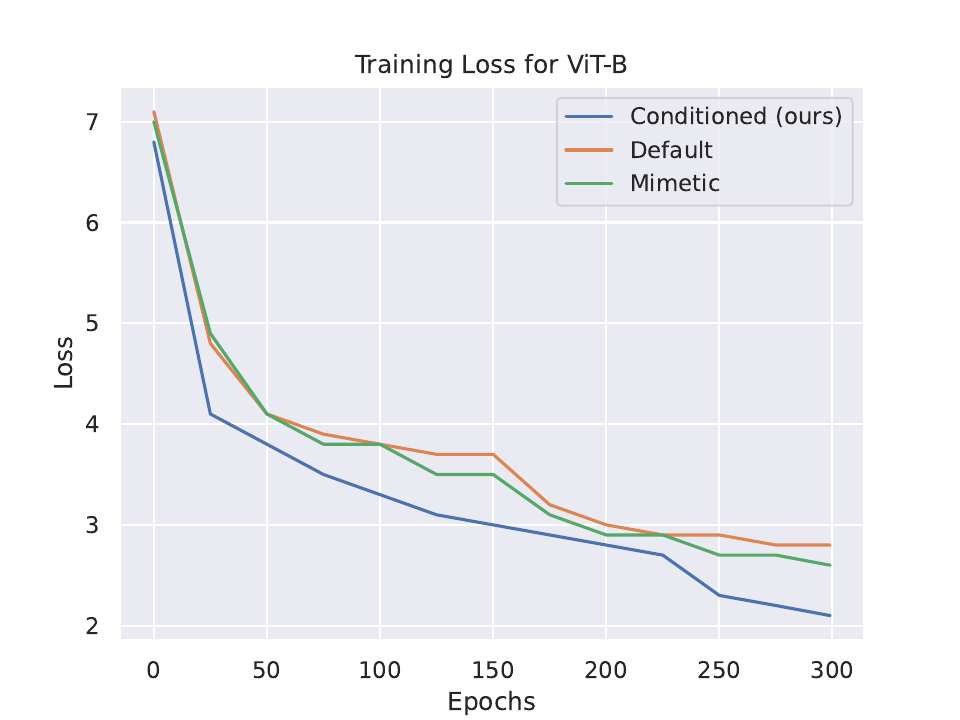}
    \hfill
    \includegraphics[width=0.48\linewidth, height=0.22\textheight]{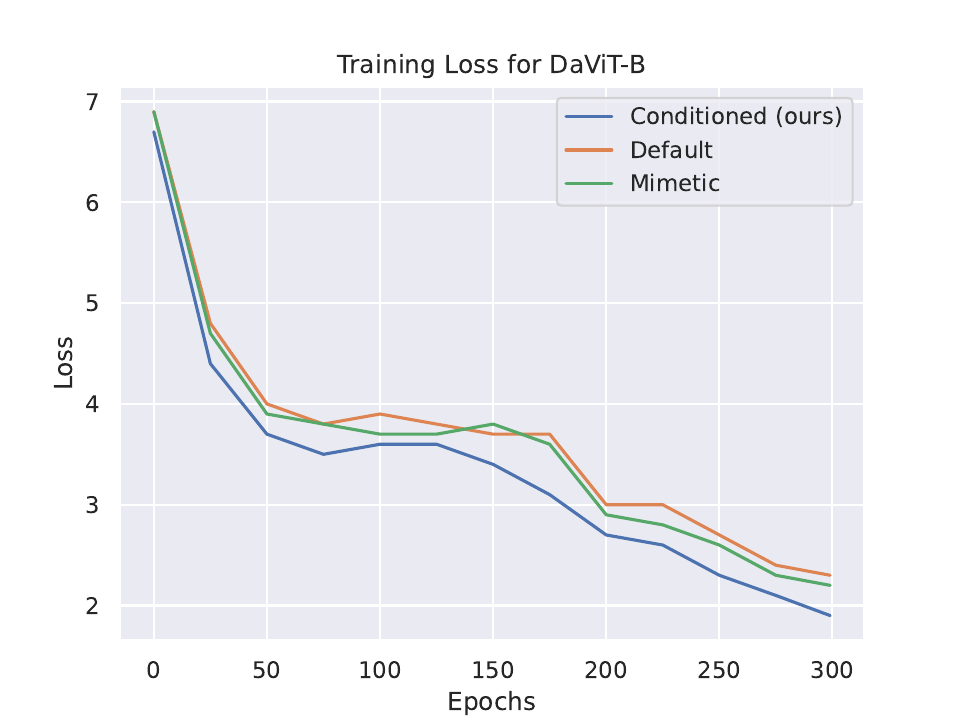}
    \caption{Training loss curves for different initializations for ViT-B (left) and DeiT-B (right).}
    \label{fig:vit_deit_loss}
\end{figure}

\begin{figure}[ht!]
    \centering
    \includegraphics[width=0.48\linewidth, height=0.22\textheight]{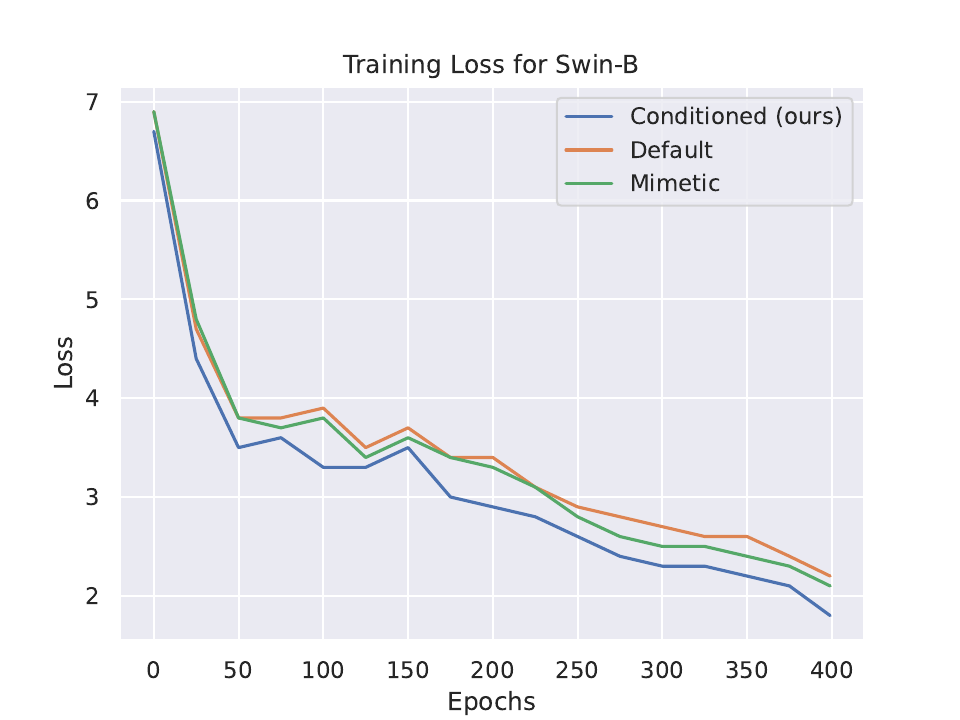}
    \hfill
    \includegraphics[width=0.48\linewidth, height=0.22\textheight]{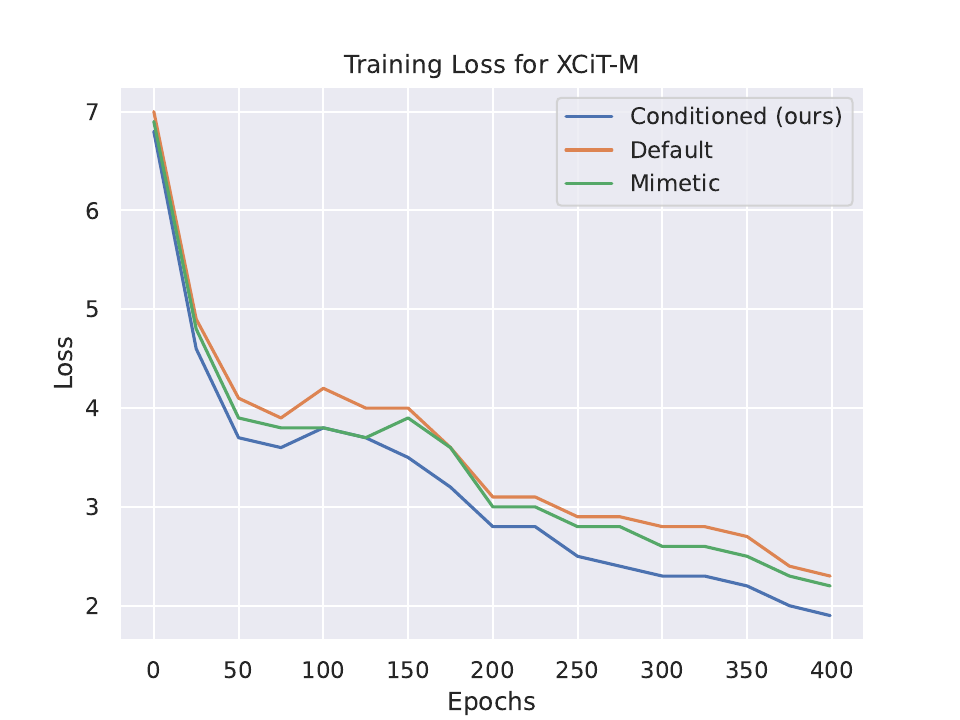}
    \caption{Training loss curves for different initializations for Swin-B (left) and XCiT-M (right).}
    \label{fig:swin_xcit_loss}
\end{figure}

\begin{figure}[ht!]
    \centering
    \includegraphics[width=0.48\linewidth, height=0.22\textheight]{figs/app/davit_b_loss.pdf}
    \caption{Training loss curves for different initializations for DaViT-B.}
    \label{fig:davit_loss}
\end{figure}

\subsubsection{Object detection and instance segmentation}\label{app:od}

\paragraph{Hardware and Implementation:} The experiments for \cref{subsec:OD} of the paper on object detection and instance segmentation were carried out on Nvidia A100 GPUs. The implementation followed 
 \cite{he2017mask}. We used the code base given by the GitHub \cite{matterport_maskrcnn} following their exact training regime.

 \subsubsection{Long Range Sequence Modeling with Nystr\"omformer}\label{app:nyst}

 \paragraph{Hardware and Implementation.} All the experiments for the
Nystr\"omformer on LRA benchmark results in \cref{subsec:nyst} were carried out on Nvidia A100 GPUs following the implementation and hyperparameter settings given in \cite{nystromformer_github}.

\subsubsection{Language Modeling}\label{app:language}

\paragraph{Crammed BERT.} The BERT language modeling experiment in \cref{sec:lang} were all carried out on a Nvidia A6000 GPU. The Crammed-Bert was implemented following the original paper \cite{geiping2023cramming} and the original GitHub \cite{CrammingGit}. The training regime follows \cite{CrammingGit}.

\paragraph{GPT-2 on TinyStories.} We train an autoregressive Transformer, namely a GPT-2 architecture
trained on the TinyStories dataset \citep{eldan2023tinystories}.
Once again we compared three initializations: one with the default normal initialization (\(\mu=0, \sigma=0.02\)), one with mimetic and a third with conditioned initialization. 
As shown in \cref{tab:gpt_results}, conditioned initialization achieves a lower perplexity than both the default and mimetic initializations showing that it boosts performance in the setting of autoregressive Transformers.
We used the training regime from \citep{eldan2023tinystories}.

\begin{table}[h]
    \centering
    %\small
    \setlength{\tabcolsep}{7pt}
    \caption{GPT-2 models trained on the TinyStories dataset. We initialize a model in three different ways. Conditioned initialization achieves a perplexity than the other two initializations.}
    \label{tab:gpt_results}
    \renewcommand{\arraystretch}{1.15}
    \begin{tabular}{lccc cc}
        \toprule
         & Perplexity \\ 
        \midrule
        Default & 2.47 \\
        \midrule
        Mimetic & 2.40 \\
        \midrule
        Conditioned (ours)  & 2.29\\
        \bottomrule
    \end{tabular}
\end{table}

\subsection{Larger Scale Experiments}\label{app:large_exps}

In this section we tested our initialization on much larger models in both the vision and language setting.

We ran a larger GPT-2 model, 472 million parameters, on the WikiText-103 dataset and the TinyStories dataset. The results are shown in \cref{tab:ts_400} and \cref{tab:wikitext_500} below clearly shows our initialization obtains better performance by obtaining the lowest perplexity.

\begin{table}[!ht]
    \centering
    \caption{Perplexity of 472M GPT-2 model pretrained on TinyStories.}
    \begin{tabular}{lc}
        \toprule
        Model & Perplexity \\
        \midrule
        Default & 2.39 \\
        Mimetic & 2.32 \\
        Conditioned (ours) & 2.20 \\
        \bottomrule
    \end{tabular}
    \label{tab:ts_400}
\end{table}

\begin{table}[!ht]
    \centering
    \caption{Perplexity of 472M GPT-2 model pretrained on WikiText-103.}
    \begin{tabular}{lc}
        \toprule
        Model & Perplexity \\
        \midrule
        Default & 44.2 \\
        Mimetic & 43.8 \\
        Conditioned (ours) & 42.7 \\
        \bottomrule
    \end{tabular}
    \label{tab:wikitext_500}
\end{table}

We repeated this experiment with a 1.72 billion parameter GPT-2 model. In this case we witnessed overfitting in all initialized cases. However, this is to be expected as a billion parameter model is too large for the TinyStories and WikiText-103 datasets. Yet even in this case our initialization performed much better as can be seen from \cref{tab:ts_4001b} and \cref{tab:wikitext_5001b}.

\begin{table}[!ht]
    \centering
    \caption{Perplexity of 1.72B GPT-2 model pretrained on TinyStories.}
    \begin{tabular}{lc}
        \toprule
        Model & Perplexity \\
        \midrule
        Default & 4.15 \\
        Mimetic & 4.25 \\
        Conditioned (ours) & 4.03 \\
        \bottomrule
    \end{tabular}
    \label{tab:ts_4001b}
\end{table}

\begin{table}[!ht]
    \centering
    \caption{Perplexity of 1.72B GPT-2 model pretrained on WikiText-103.}
    \begin{tabular}{lc}
        \toprule
        Model & Perplexity \\
        \midrule
        Default & 48.1 \\
        Mimetic & 48.5 \\
        Conditioned (ours) & 46.9 \\
        \bottomrule
    \end{tabular}
    \label{tab:wikitext_5001b}
\end{table}

We also ran experiments on large scale ViTs on the ImageNet-1k dataset, where these models range from 200-300M parameters. Once again for these larger models we witnessed overfitting (this has also been observed in \cite{dosovitskiy2020image, ding2022davit}) yet even in this case our initialization outperformed the other two standard ones, see \cref{tab:large_vits}.

\begin{table}[h!]
\centering
\caption{Large scale ViTs with different initializations pretrained on ImageNet-1k. We show the Top1\% accuracy.}
\begin{tabular}{|c|c|c|c|c|c|}
\hline
 & ViT-L & DeiT-L & Swin-L & XCiT-L & DaViT-L \\ \hline
Original & 79.6 & 80.7 & 82.6 & 81.5 & 83.3 \\ \hline
Mimetic 2 & 79.7 & 80.6 & 82.4 & 81.5 & 83.2 \\ \hline
Conditioned (ours) & 80.7 & 81.5 & 83.7 & 82.4 & 84.4 \\ \hline
\end{tabular}
\label{tab:large_vits}
\end{table}

We then ran two 1 billion parameter models namely a ViT-Giant (ViT-G) and a DaViT-Giant (DaViT-G) as these where some of the standard billion parameter vision transformers we could find in \cite{timm2025}. We pretrained them each with the different initializations on ImageNet-1k. As can be seen from \cref{tab:billion_vits} our initialization out performs the other two.

\begin{table}[h!]
\centering
\caption{1B scale ViTs with different initializations pretrained on ImageNet-1k. We show the Top1\% accuracy.}
\begin{tabular}{|c|c|c|}
\hline
 & ViT-G & DaViT-G  \\ \hline
Original & 78.8 &  81.9 \\ \hline
Mimetic 2 & 78.6 & 82.0  \\ \hline
Conditioned (ours) & 79.9 & 82.9 \\ \hline
\end{tabular}
\label{tab:billion_vits}
\end{table}

\end{document}